\DeclareMathOperator{\sgn}{sgn}
\DeclareMathOperator{\tr}{tr}
\begin{document}

\title{A Nonlinear Kernel Support Matrix Machine for Matrix Learning%\thanks{Grants or other notes
%about the article that should go on the front page should be
%placed here. General acknowledgments should be placed at the end of the article.}
}
%\subtitle{Do you have a subtitle?\\ If so, write it here}

%\titlerunning{Short form of title}        % if too long for running head

\author{Yunfei Ye
}

%\authorrunning{Short form of author list} % if too long for running head

\institute{Yunfei Ye \at
              Department of Mathematical Sciences, Shanghai Jiao Tong University \\
              800 Dongchuan RD Shanghai, 200240 China \\
              \email{tianshapojun@sjtu.edu.cn}           %  \\
%             \emph{Present address:} of F. Author  %  if needed
%           \and
%           S. Author \at
%              second address
}

\date{First Vision: \today} %/ Accepted: date}
% The correct dates will be entered by the editor

\maketitle

\begin{abstract}
In many problems of supervised tensor learning (STL), real world data such as face images or MRI scans are naturally represented as matrices, which are also called as second order tensors. Most existing classifiers based on tensor representation, such as support tensor machine (STM) need to solve iteratively which occupy much time and may suffer from local minima. In this paper, we present a kernel support matrix machine (KSMM) to perform supervised learning when data are represented as matrices. KSMM is a general framework for the construction of matrix-based hyperplane to exploit structural information. We analyze a unifying optimization problem for which we propose an asymptotically convergent algorithm. Theoretical analysis for the generalization bounds is derived based on Rademacher complexity with respect to a probability distribution. We demonstrate the merits of the proposed method by exhaustive experiments on both simulation study and a number of real-word datasets from a variety of application domains.

\keywords{Kernel support matrix machine \and Supervised tensor learning \and Reproducing kernel matrix Hilbert space \and Matrix Hilbert space}
% \PACS{PACS code1 \and PACS code2 \and more}
% \subclass{MSC code1 \and MSC code2 \and more}
\end{abstract}

\section{Introduction}
The supervised learning tasks are often encountered in many fields including pattern recognition,
image processing and data mining. Data are represented as feature vectors to handle
such tasks. Among all the algorithms based on the vector framework, Support Vector Machine
(SVM) \citep{vapnik1995} is the most representative one due to numerous theoretical
and computational developments. Later, the support vector method was extended to improve
its performance in many applications. Radial basis function classifiers were introduced in SVM to solve nonlinear separable problems \citep{scholkopf1997comparing}. The use of SVM for density estimation \citep{weston1997density} and ANOVA decomposition \citep{stitson1997support} has also been studied. Least squares SVM (LS-SVM) \citep{suykens1999least} modifies the equality constraints in the optimization problem to solve a set of linear equations instead of quadratic ones. Transductive SVM (TSVM) \citep{joachims1999transductive} tries to minimize misclassification error of a particular test set. $\nu$-SVM \citep{scholkopf2000new} includes a new parameter $\nu$ to effectively control the number of support vectors for both regression and classification. One-Class SVM (OCSVM) \citep{scholkopf2001estimating} aims to identify one available class, while characterizing other classes is either expensive or difficult. Twin SVM (TWSVM) \citep{khemchandani2007twin} is a fast algorithm solving two quadratic programming problems of smaller sizes instead of a large one in classical SVM.

However it is more natural to represent real-world data as matrices or higher-order tensors. Within the last decade, advanced researches have been exploited on retaining the structure of tensor data and extending SVM to tensor patterns. Tao et al. proposed a Supervised Tensor Learning (STL) framework to address the tensor problems \citep{tao2005supervised}. Under this framework, Support Tensor Machine (STM) was studied to separate multilinear hyperplanes by applying alternating projection methods \citep{cai2006learning}. Tao et al. \citep{Tao2007Supervised} extended the classical linear C-SVM \citep{cortes1995support}, $\nu$-SVM and least squares SVM \citep{suykens1999least} to general tensor patterns. One-Class STM (OCSTM) was generalized to obtain most interesting tensor class with maximal margin \citep{chen2016one,erfani2016r1stm}. Joint tensor feature analysis (JTFA) was proposed for tensor feature extraction and recognition by \cite{wong2015joint}. Support Higher-order Tensor Machine (SHTM) \citep{hao2013linear} integrates the merits of linear C-SVM and rank-one decomposition. Kernel methods for tensors were also introduced in nonlinear cases. Factor kernel \citep{signoretto2011kernel} calculates the similarity between tensors using techniques of tensor unfolding and singular value decomposition (SVD). Dual structure-preserving kernels (DuSK) \citep{he2014dusk} is a generalization of SHTM with dual-tensorial mapping functions to detect dependencies of nonlinear tensors. Support matrix machines (SMM) \citep{Luo2015Support} aims to solve a convex optimization problem considering a hinge loss plus a so-called spectral elastic net penalty. These methods essentially take advantage of the low-rank assumption, which can be used for describing the correlation within a tensor.

In this paper we are concerned with classification problems on a set of matrix data. We present a kernel support matrix machine (KSMM) and it is motivated by the use of matrix Hilbert space \citep{2017arXiv170608110Y}. Its cornerstones is the introduction of a matrix as the inner product to compile the complicated relationship among samples. KSMM is a general framework for constructing a matrix-based hyperplane through calculating the weighted average distance between data and multiple hyperplanes. We analyze a unifying optimization problem for which we propose an asymptotically convergent algorithm built on the Sequential Minimal Optimization (SMO) \citep{Platt1999Fast} algorithm.  Generalization bounds of SVM were discussed based on Rademacher complexity with respect to a probability distribution \citep{Shalev2014Understanding}; here we extend their definitions to a more general and flexible framework. The contribution of this paper is listed as follows. One main contribution is to develop a new classifier for matrix learning where the optimization problem is solved directly without adopting the technique of alternating projection method in STL. Important special cases of the framework include classifiers of SVM. Another contribution lies within a matrix-based hyperplane that we propose in the algorithm to separate objects instead of determining multiple hyperplanes as in STM.

The rest of this paper is organized as follows. In Sect.~\ref{lksmm}, we discuss the framework of kernel support matrix machine in linear case. We show its dual problem and present a template algorithm to solve this problem. In Sect.~\ref{nksmm} we extend to the nonlinear task by adopting the methodology of reproducing kernels. Sect.~\ref{gb} deals with the generalization bounds based on Rademacher complexity with respect to a probability distribution. Differences among several classifiers are discussed in Sect.~\ref{discuss}. In Sect.~\ref{experiment} we study our model's performance in both simulation study and benchmark datasets. Finally, concluding remarks are drawn in Sect.~\ref{cr}.

\section{Kernel Support Matrix Machine}

In this section, we put forward the Kernel Support Matrix Machine (KSMM) which makes a closed connection between matrix Hilbert Space \citep{2017arXiv170608110Y} and the supervised tensor learning (STL). We construct a hyperplane in the matrix Hilbert space to separate two communities of examples. Then, the SMO algorithm is introduced to handle with the new optimization problem. Next, we derive the generalization bounds for KSMM based on Rademacher complexity with respect to a probability distribution. Finally, we analyze and compare the differences of KSMM with other state-of-the-art methodologies.

\subsection{Kernel Support Matrix Machine in linear case}\label{lksmm}

We first introduce some basic notations and definitions. In this study, scales are denoted by lowercase letters, e.g., s, vectors by boldface lowercase letters, e.g., \textbf{v}, matrices by boldface capital letters, e.g., \textbf{M} and general sets or spaces by gothic letters, e.g., $\mathcal{S}$.

The Frobenius norm of a matrix $\textbf{X} \in \mathbb{R}^{m\times n}$ is defined by
\begin{equation*}
  \| \textbf{X} \|=\sqrt{\sum_{i_1=1}^{m} \sum_{i_2=1}^{n} x_{i_1 i_2}^2},
\end{equation*}
which is a generalization of the normal $\ell_2$ norm for vectors.

The inner product of two same-sized matrices $\textbf{X},\textbf{Y} \in \mathbb{R}^{m\times n}$ is defined as the sum of products of their entries, i.e.,
\begin{equation*}
  \langle \textbf{X},\textbf{Y} \rangle=\sum_{i_1=1}^{m} \sum_{i_2=1}^{n} x_{i_1 i_2}y_{i_1 i_2}.
\end{equation*}

Inspired by the previous work, we introduce the matrix inner product to the framework of STM in matrix space. The matrix inner product is defined as follows:

\begin{definition}[Matrix Inner Product]\label{ip}
Let $\mathcal{H}=\mathbb{R}^{m \times n}$ be a real linear space, the matrix inner product is a mapping $\langle \cdot, \cdot \rangle_{\mathcal{H}} : \mathcal{H} \times \mathcal{H} \rightarrow \mathbb{R}^{n \times n}$ satisfying the following properties, for all $\textbf{X}, \textbf{X}_1, \textbf{X}_2, \textbf{Y} \in \mathcal{H}$

(1) $\langle \textbf{Y},\textbf{X} \rangle_{\mathcal{H}} = \langle \textbf{X},\textbf{Y} \rangle_{\mathcal{H}} ^\intercal$

(2) $\langle \lambda\textbf{X}_1+\mu\textbf{X}_2,\textbf{Y} \rangle_{\mathcal{H}} = \lambda\langle \textbf{X}_1,\textbf{Y} \rangle_{\mathcal{H}}+\mu\langle \textbf{X}_2,\textbf{Y} \rangle_{\mathcal{H}}$

(3) $\langle \textbf{X},\textbf{X} \rangle_{\mathcal{H}}=\textbf{0}$ if and only if \textbf{X} is a null matrix

(4) $\langle \textbf{X},\textbf{X} \rangle_{\mathcal{H}}$ is positive semidefinite.
\end{definition}

%Based on the notion, we derive the following function which maps from matrix inner product to classical scalar product.

%\begin{equation*}
%\langle \cdot, \cdot \rangle_{\mathcal{H}} \stackrel{f}{\rightarrow} \langle \langle \cdot, \cdot \rangle_{\mathcal{H}}, \frac{\textbf{V}}{\|\textbf{V}\|} \rangle,
%\end{equation*}
%where $\textbf{V} \in \mathbb{R}^{n \times n}$ is a symmetric and nonsingular matrix satisfying: $\langle \langle \textbf{X},\textbf{X} \rangle_{\mathcal{H}}, \frac{\textbf{V}}{\|\textbf{V}\|} \rangle \geq 0$ for all $\textbf{X} \in \mathcal{H}$ and the case of equality holds precisely when $\textbf{X}$ is a null matrix. For example, $\langle \textbf{X},\textbf{Y} \rangle_{\mathcal{H}} =  \textbf{X}^\intercal \textbf{Y}$ with $\textbf{V}=\textbf{I}_{n \times n}$ satisfies all conditions above in $\mathcal{H}=\mathbb{R}^{m \times n}$. For convenience, we assume that all the matrices $\textbf{V}$ in the rest of paper follow the above property. 

This thus motivates us to reformulate the optimization problem in STM. Considering a set of samples $\{(y_i,\textbf{X}_i)\}_{i=1}^N$ for binary classification problem, where $\textbf{X}_i \in \mathbb{R}^{m \times n}$ are input matrix data and $y_i \in \{-1,+1\}$ are corresponding class labels. We assume that $\{\textbf{X}_i\}_{i=1}^N$ and $\textbf{W} \in \mathbb{R}^{m \times n}$ are in the matrix Hilbert space $\mathcal{H}$, $\textbf{V} \in \mathbb{R}^{n \times n}$ is a symmetric matrix satisfying: 

\begin{equation}\label{ass1}
\langle \langle \textbf{X},\textbf{X} \rangle_{\mathcal{H}}, \frac{\textbf{V}}{\|\textbf{V}\|} \rangle \geq 0 
\end{equation}
for all $\textbf{X} \in \mathcal{H}$. In particular, the problem of KSMM can be described in the following way:
\begin{equation}\label{op}
\begin{split}
  &\min_{\textbf{W},b,\bm{\xi},\textbf{V}} \ \frac{1}{2}\|\textbf{W}\|_{\mathcal{H}}^2(\textbf{V}) + C \sum_{i=1}^N \xi_i \\
  &s.t. \ y_i(\langle\langle \textbf{W},\textbf{X}_i\rangle_{\mathcal{H}},\frac{\textbf{V}}{\|\textbf{V}\|}\rangle+b)\geq 1-\xi_i, \ 1 \leq i \leq N \\
  &\quad \ \ \bm{\xi} \geq 0,
\end{split}
\end{equation}
where the inner product $\langle \cdot,\cdot \rangle_{\mathcal{H}}$ is specified as $\langle \textbf{A},\textbf{B} \rangle_{\mathcal{H}}=\textbf{A}^\intercal \textbf{B}$ for $\textbf{A}, \textbf{B} \in \mathbb{R}^{m \times n}$ and the norm is defined as  $\|\cdot\|_{\mathcal{H}}(\textbf{V})=(\langle \langle \cdot, \cdot \rangle_{\mathcal{H}}, \frac{\textbf{V}}{\|\textbf{V}\|} \rangle)^{1/2}$. $\bm{\xi}=[\xi_1, \cdots, \xi_N]^T$ is the vector of all slack variables of training examples and $C$ is the trade-off between the classification margin and misclassification error. %Notice that $\|\textbf{W}\|_{\mathcal{H}}(\textbf{V})$ is no less than 0, otherwise we set \textbf{W}=-\textbf{W} and \textbf{V}=-\textbf{V}.

\begin{remark}
 The proposed problem (\ref{op}) degenerates into the classical SVM when $n=1$. 
 \end{remark}

\begin{remark}
Two classes of labels are separated by a hyperplane $\langle\langle \textbf{W},\textbf{X}_i\rangle_{\mathcal{H}},\frac{\textbf{V}}{\|\textbf{V}\|}\rangle+b=0$. The expression can be decomposed into two parts: one is controlled by normal matrix \textbf{W} while the other is constrained by weight matrix \textbf{V}. Each entry of the matrix inner product $\langle \textbf{W},\textbf{X}_i\rangle_{\mathcal{H}}$ measures a relative ``distance'' from \textbf{X} to a certain hyperplane. To make explicit those values underlying their own behavior, we introduce a weight matrix \textbf{V} which determines the relative importance of each hyperplane on the average.
\end{remark}

Once the model has been solved, the class label of a testing example X can be predicted as follow:
\begin{equation}
y(\textbf{X})=\sgn(\langle\langle \textbf{W},\textbf{X}\rangle_{\mathcal{H}},\frac{\textbf{V}}{\|\textbf{V}\|}\rangle+b).
\end{equation}

The Lagrangian function of the optimization problem (\ref{op}) is
\begin{equation}\label{Lp}
L(\textbf{W},b,\bm{\xi},\textbf{V})=\frac{1}{2}\|\textbf{W}\|_{\mathcal{H}}^2(\textbf{V}) + C \sum_{i=1}^N \xi_i-\sum_{i=1}^N \alpha_{i}(y_i(\langle\langle \textbf{W},\textbf{X}_i\rangle_{\mathcal{H}},\frac{\textbf{V}}{\|\textbf{V}\|}\rangle+b)-1+\xi_i)
-\sum_{i=1}^N \beta_{i}\xi_{i}.
\end{equation}

Let the partial derivatives of $L(\textbf{W},b,\bm{\xi},\textbf{V})$ with respect to
\textbf{W}, b, $\bm{\xi}$ and \textbf{V} be zeros respectively, we have
\begin{equation}\label{Lc}
\begin{split}
& \textbf{W}= \sum_{i=1}^N \alpha_i y_i \textbf{X}_i. \\
& \sum_{i=1}^N \alpha_i y_i=0. \\
& \alpha_i+\beta_i=C, \ i=1,\cdots,N. \\
& \textbf{V}= a\sum_{i,j=1}^N \alpha_i \alpha_j y_i y_j \langle\textbf{X}_i, \textbf{X}_j\rangle_{\mathcal{H}},
\end{split}
\end{equation}
where $a$ is a positive real number. Substituting (\ref{Lc}) into (\ref{Lp}) yields the dual of the optimization problem (\ref{op}) as follows:
\begin{equation}\label{dp}
\begin{split}
  &\max_{\alpha} \ \sum_{i=1}^N \alpha_i-\frac{1}{2}(\langle \sum_{i,j=1}^N \alpha_i \alpha_j y_i y_j \langle\textbf{X}_i, \textbf{X}_j\rangle_{\mathcal{H}},\sum_{i,j=1}^N \alpha_i \alpha_j y_i y_j \langle\textbf{X}_i, \textbf{X}_j\rangle_{\mathcal{H}}\rangle)^{1/2} \\
  &s.t. \ \sum_{i=1}^N \alpha_i y_i=0, \\
  &\quad \ \ 0\leq \alpha_i \leq C, \ 1 \leq i \leq N
\end{split}
\end{equation}
where $\alpha_i$ are the Lagrange multipliers.

Notice that for all $\textbf{X} \in \mathcal{H}$, we have
\begin{equation}
\begin{split}
\langle \langle \textbf{X},\textbf{X} \rangle_{\mathcal{H}}, \frac{\textbf{V}}{\|\textbf{V}\|} \rangle &= \langle \textbf{X}^{\intercal} \textbf{X},\frac{\textbf{W}^{\intercal} \textbf{W}}{\|\textbf{W}^{\intercal} \textbf{W}\|}\rangle =\frac{1}{\|\textbf{W}^{\intercal} \textbf{W}\|}\sum_{i,j=1}^m (\sum_{p=1}^m x_{pi} x_{pj}) (\sum_{q=1}^m w_{qi} w_{qj}) \\
&=\frac{1}{\|\textbf{W}^{\intercal} \textbf{W}\|} \sum_{p,q=1}^m (\sum_{i=1}^n x_{pi} w_{qi}) (\sum_{j=1}^n x_{pj} w_{qj}) \\
&=\frac{1}{\|\textbf{W}^{\intercal} \textbf{W}\|} \sum_{p,q=1}^m (\sum_{i=1}^n x_{pi} w_{qi})^2 \geq 0,
\end{split}
\end{equation}
which indicates that the matrix \textbf{V} we derives from Lagrange multiplier method satisfies condition (\ref{ass1}).

Furthermore, the Karush-Kuhn-Tucker (KKT) conditions are fulfilled when the optimization problem (\ref{Lp}) is solved, that is for all $i$:
\begin{equation}\label{KTT}
\begin{split}
\alpha_i=0 &\Rightarrow y_i f(\textbf{X}_i) \geq 1, \\
0 < \alpha_i < C &\Rightarrow y_i f(\textbf{X}_i) = 1, \\
\alpha_i=C &\Rightarrow y_i f(\textbf{X}_i) \leq 1, \\
\end{split}
\end{equation}
where $f(\textbf{X}_i)=\langle\langle \textbf{W},\textbf{X}_i\rangle_{\mathcal{H}},\frac{\textbf{V}}{\|\textbf{V}\|}\rangle+b$. Next, we summarize and improve the SMO algorithm to solve the optimization problem (\ref{dp}). At each step, SMO chooses two Lagrange multipliers to jointly optimize the objective function $J(\bm{\alpha})$ while other multipliers are fixed, which can be computed as follow:
\begin{equation}\label{of}
J(\bm{\alpha})=\sum_{i=1}^N \alpha_i-\frac{1}{2}(\langle \sum_{i,j=1}^N \alpha_i \alpha_j y_i y_j \langle\textbf{X}_i, \textbf{X}_j\rangle_{\mathcal{H}},\sum_{i,j=1}^N \alpha_i \alpha_j y_i y_j \langle\textbf{X}_i, \textbf{X}_j\rangle_{\mathcal{H}}\rangle)^{1/2}.
\end{equation}

For convenience, all quantities that refer to the first multiplier will have a subscript 1, while all quantities that refer to the second multiplier will have a subscript 2. Without lose of generality, the algorithm computes the second Lagrange multiplier $\alpha_2$ and then updates the first Lagrange multiplier $\alpha_1$ at each step. Notice that $\alpha_1 y_1+\alpha_2 y_2=constant \Leftrightarrow \alpha_1=constant-y_1 y_2 \alpha_2$, we can rewritten (\ref{of}) in terms of $\alpha_2$ as:
\begin{equation*}
J(\alpha_2)= (1-y_1 y_2)\alpha_2-\frac{1}{2}(\langle \textbf{W}^\intercal\textbf{W},\textbf{W}^\intercal\textbf{W} \rangle)^{1/2}+constant,
\end{equation*}
where $\textbf{W}= \sum_{i=1}^N \alpha_i y_i \textbf{X}_i$ and $\frac{\partial \textbf{W}}{\partial \alpha_2}=y_2(\textbf{X}_2-\textbf{X}_1)=\textbf{A}$.

Compute the partial derivative and second partial derivative of the object function, we can obtain that
\begin{equation}\label{pd}
\begin{split}
& \frac{\partial J}{\partial \alpha_2}=1-y_1 y_2-\frac{\langle \textbf{A}^\intercal\textbf{W},\textbf{W}^\intercal\textbf{W}\rangle}{(\langle \textbf{W}^\intercal\textbf{W},\textbf{W}^\intercal\textbf{W} \rangle)^{1/2}}, \\
& \frac{\partial^2 J}{\partial \alpha_2^2}=-\frac{(\langle \textbf{A}^\intercal\textbf{A},\textbf{W}^\intercal\textbf{W} \rangle+\langle \textbf{A}^\intercal\textbf{W},\textbf{A}^\intercal\textbf{W}+\textbf{W}^\intercal\textbf{A} \rangle)\langle \textbf{W}^\intercal\textbf{W},\textbf{W}^\intercal\textbf{W} \rangle-2\langle \textbf{A}^\intercal\textbf{W},\textbf{W}^\intercal\textbf{W}\rangle^2}{(\langle \textbf{W}^\intercal\textbf{W},\textbf{W}^\intercal\textbf{W} \rangle)^{3/2}}.
\end{split}
\end{equation}

We can easily derive that
\begin{equation*}
\begin{split}
& \langle \textbf{A}^\intercal\textbf{A},\textbf{W}^\intercal\textbf{W} \rangle=\langle \langle \textbf{A},\textbf{A}\rangle_{\mathcal{H}},a\textbf{V}\rangle=a\|\textbf{A}\|_{\mathcal{H}}^2 \|\textbf{V}\| \geq 0, \\
&\langle \textbf{A}^\intercal\textbf{W},\textbf{A}^\intercal\textbf{W}+\textbf{W}^\intercal\textbf{A} \rangle \langle \textbf{W}^\intercal\textbf{W},\textbf{W}^\intercal\textbf{W} \rangle-2\langle \textbf{A}^\intercal\textbf{W},\textbf{W}^\intercal\textbf{W}\rangle^2 \\
& = \frac{1}{2} \langle\textbf{A}^\intercal\textbf{W}+\textbf{W}^\intercal\textbf{A},\textbf{A}^\intercal\textbf{W}+\textbf{W}^\intercal\textbf{A} \rangle \langle \textbf{W}^\intercal\textbf{W},\textbf{W}^\intercal\textbf{W} \rangle-\frac{1}{2}\langle \textbf{A}^\intercal\textbf{W}+\textbf{W}^\intercal\textbf{A},\textbf{W}^\intercal\textbf{W}\rangle^2 \geq 0.
\end{split}
\end{equation*}
The second inequality holds according to the Cauchy-Schwarz inequality. The second partial derivative of the objective function is no more than zero. Therefore, the location of the constrained maximum of the objective function is either at the bounds or at the extreme point.

On the other hand, let $\frac{\partial J}{\partial \alpha_2}$ be zero we obtain a function of the sixth degree which does not have a closed-form. Therefore, the Newton's method is applied to iteratively find the optimal value of $\alpha_2$. At each step, we update the $\alpha_2^{n+1}$ as:
\begin{equation}
\alpha_2^{n+1}=\alpha_2^{n}-\frac{J''(\alpha_2^{n})}{J'(\alpha_2^{n})}, \ \alpha_2^{0}=\alpha_2^{old}
\end{equation}
until it converges to $\alpha_2^{new}$.

Remember that the two Lagrange multipliers must fulfill all of the constraints of problem (\ref{Lp}) that the lower bound $L$ and the upper bound $H$ of $\alpha_2$ can be concluded as for labels $y_1 \neq y_2$:
\begin{equation}
L=\max(0,\alpha_2^{old}-\alpha_1^{old}), \ H=\min(C,C+\alpha_2^{old}-\alpha_1^{old}).
\end{equation}
If labels $y_1 = y_2$, then the following bounds apply to $\alpha_2$:
\begin{equation}
L=\max(0,\alpha_1^{old}+\alpha_2^{old}-C), \ H=\min(C,\alpha_1^{old}+\alpha_2^{old}).
\end{equation}

Next, the constrained maximum is found by clipping the unconstrained maximum to the bounds of the domain:
\begin{equation}\label{cc}
\alpha_2^{new,clipped}=\left\{
\begin{aligned}
&H, &if \ \alpha_2^{new} \geq H\\
&\alpha_2^{new}, &if \ L \leq \alpha_2^{new} \leq H\\
&L, &if \ \alpha_2^{new} \leq L.
\end{aligned}
\right.
\end{equation}

Then the value of $\alpha_1$ is calculated from the new, clipped $\alpha_2$:
\begin{equation}
\alpha_1^{new}=\alpha_1^{old}+y_1 y_2 (\alpha_2^{old}-\alpha_2^{new,clipped}).
\end{equation}

This process is repeated iteratively until the maximum number of outer loops M is reached or all of the Lagrange multipliers hold the KTT conditions. Typically, we terminate the inner loop of Newton's method if $\|\frac{J''(\alpha_2^{old})}{J'(\alpha_2^{old})}\|<\varepsilon$, where $\varepsilon$ is a threshold parameter. 

Then we present the strategy on the choices of two Lagrange multipliers. When iterates over the entire training set, the first one which violates the KTT condition (\ref{KTT}) is determined as the first multiplier. Once a violated example is found, the second multiplier is chosen randomly unlike that of the classical SMO for the closed-form of the extreme point can not be derived directly.

\begin{table}
\begin{tabular*}{1\textwidth}{@{\extracolsep{\fill}}  l  }
\hline\noalign{\smallskip}
\textbf{Algorithm:} Kernel Support Matrix Machine \\
\hline
\noalign{\smallskip}

\noindent \textbf{Input:} The set of training data $\{\textbf{X}_i \in \mathbb{R}^{m \times n},y_i\}_{i=1}^N$, test data set $\{\textbf{Z}_i\}$, cost C,  maximum
number of \\ outer loops M and threshold parameter $\varepsilon$ \\

\noindent \textbf{Output:} The estimated label $y(\textbf{Z}_i)$ \\

\noindent Initialization. Take $t=0$, $\alpha_1^0=\cdots=\alpha_N^0 \in [0,C]$ \\

\noindent \textbf{while} Stopping criterion is not satisfied \textbf{do}\\

\noindent \qquad Get $\alpha_1^t$ which validates condition (\ref{KTT})\\

\noindent \qquad  Randomly pick up $\alpha_2^t$ \\

\noindent \qquad  $\alpha_2^{old}=\alpha_2^{new}=\alpha_2^t$ \\

\noindent \qquad \textbf{while} Stopping criterion is not satisfied \textbf{do}\\

\noindent \qquad \qquad $\alpha_2^{old}=\alpha_2^{new}$ \\

\noindent \qquad \qquad $\alpha_2^{new}=\alpha_2^{old}-\frac{J''(\alpha_2^{old})}{J'(\alpha_2^{old})}$ \\

\noindent \qquad \textbf{end while} \\

\noindent \qquad $\alpha_2^{t+1}=\alpha_2^{new,clipped}$ using (\ref{cc})\\

\noindent \qquad $\alpha_1^{t+1}=\alpha_1^{t}+y_1 y_2 (\alpha_2^{t}-\alpha_2^{t+1})$ \\

\noindent \qquad $t \leftarrow t+1$ \\

\noindent \textbf{end while} \\

\noindent $\widehat{\bm{\alpha}} \leftarrow \bm{\alpha}^{t}$ \\

\noindent Calculate $\widehat{b}$ in problem (\ref{op}) \\

\noindent $y(\textbf{Z}_j)=\sgn(\langle \sum_{i=1}^N \widehat{\alpha}_i y_i \langle \textbf{X}_i,\textbf{Z}_j\rangle_{\mathcal{H}}, \frac{\sum_{i,j=1}^N \widehat{\alpha}_i \widehat{\alpha}_j y_i y_j \langle\textbf{X}_i, \textbf{X}_j\rangle_{\mathcal{H}}}{\|\sum_{i,j=1}^N \widehat{\alpha}_i \widehat{\alpha}_j y_i y_j \langle\textbf{X}_i, \textbf{X}_j\rangle_{\mathcal{H}}\|} \rangle+\widehat{b})$ \\
%\noalign{\smallskip}
\hline
\end{tabular*}
\end{table}

Like the SMO algorithm, we update the parameter $b$ using following strategy: the parameter $b_2$ updates when the new $\alpha_2$ is not at the bounds which forces the output $y(\textbf{X}_2)$ to be $y_2$.
\begin{equation*}
b_2=y_2-\langle \sum_{i=1}^N \widehat{\alpha}_i y_i \langle \textbf{X}_i,\textbf{X}_2\rangle_{\mathcal{H}}, \frac{\sum_{i,j=1}^N \widehat{\alpha}_i \widehat{\alpha}_j y_i y_j \langle\textbf{X}_i, \textbf{X}_j\rangle_{\mathcal{H}}}{\|\sum_{i,j=1}^N \widehat{\alpha}_i \widehat{\alpha}_j y_i y_j \langle\textbf{X}_i, \textbf{X}_j\rangle_{\mathcal{H}}\|} \rangle.
\end{equation*}
The parameter $b_1$ updates when the new $\alpha_1$ is not at the bounds which forces the output $y(\textbf{X}_1)$ to be $y_1$.
\begin{equation*}
b_1=y_1-\langle \sum_{i=1}^N \widehat{\alpha}_i y_i \langle \textbf{X}_i,\textbf{X}_1\rangle_{\mathcal{H}}, \frac{\sum_{i,j=1}^N \widehat{\alpha}_i \widehat{\alpha}_j y_i y_j \langle\textbf{X}_i, \textbf{X}_j\rangle_{\mathcal{H}}}{\|\sum_{i,j=1}^N \widehat{\alpha}_i \widehat{\alpha}_j y_i y_j \langle\textbf{X}_i, \textbf{X}_j\rangle_{\mathcal{H}}\|} \rangle.
\end{equation*}

When both $b_1$ and $b_2$ are updated, they are equal. When both Lagrange multipliers are at the bounds, any number in the interval between $b_1$ and $b_2$ is consistent with the KKT conditions. We choose the threshold to be the average of $b_1$ and $b_2$. The Pseudo-code of the overall algorithm is listed above.

The objective function increases at every step and the algorithm will converge asymptotically. Even though the extra Newton's method is applied in each iteration, the overall algorithm does work efficiently.

\subsection{Kernel Support Matrix Machine in nonlinear case}\label{nksmm}
Kernel methods, which refer to as ``kernel trick'' were brought to the field of machine learning in the 20th century to overcome the difficulty in detecting certain dependencies of nonlinear problems. A Reproducing Kernel Matrix Hilbert Space (RKMHS) \citep{2017arXiv170608110Y} was introduced to develop kernel theories in the matrix Hilbert space. In this section, we define a nonlinear mapping and apply these algorithms in our KSMM.

We start by defining the following mapping on a matrix $\textbf{X} \in \mathbb{R}^{m \times n}$.
\begin{equation}
\Phi: \textbf{X} \mapsto \Phi(\textbf{X}) \in \mathbb{R}^{m \times n},
\end{equation}
where $\Phi(\textbf{X})$ is in a matrix Hilbert space $\mathcal{H}'$. Naturally, the kernel function is defined as inner products of elements in the feature space:
\begin{equation}\label{kf}
K(\textbf{X}_i,\textbf{X}_j)=\langle \Phi(\textbf{X}_i),\Phi(\textbf{X}_j) \rangle_{\mathcal{H}'} \in \mathbb{R}^{n \times n}.
\end{equation}
Further details of the structure of a RKMHS can be found in \cite{2017arXiv170608110Y}.

Substituting (\ref{kf}) into (\ref{dp}) with mapping $\Phi$ yieds the nonlinear problem as follows:
\begin{equation}
\begin{split}
  &\max_{\alpha} \ \sum_{i=1}^N \alpha_i-\frac{1}{2}(\langle \sum_{i,j=1}^N \alpha_i \alpha_j y_i y_j K(\textbf{X}_i, \textbf{X}_j),\sum_{i,j=1}^N \alpha_i \alpha_j y_i y_j K(\textbf{X}_i, \textbf{X}_j)\rangle)^{1/2} \\
  &s.t. \ \sum_{i=1}^N \alpha_i y_i=0, \\
  &\quad \ \ 0\leq \alpha_i \leq C, \ 1 \leq i \leq N
\end{split}
\end{equation}

The revised SMO algorithm is still applied under such circumstance. We emphasize that $\textbf{W}= \sum_{i=1}^N \alpha_i y_i \Phi(\textbf{X}_i)$ and $\frac{\partial \textbf{W}}{\partial \alpha_2}=y_2(\Phi(\textbf{X}_2)-\Phi(\textbf{X}_1))=\textbf{A}$. The following abbreviations are derived to compute the partial derivative and second partial derivative of the object function $J(\bm{\alpha})$ in (\ref{pd}).
\begin{equation}
\begin{split}
& \textbf{A}^\intercal\textbf{A}= K(\textbf{X}_2,\textbf{X}_2)+K(\textbf{X}_1,\textbf{X}_1)-K(\textbf{X}_1,\textbf{X}_2)-K(\textbf{X}_2,\textbf{X}_1), \\
& \textbf{A}^\intercal\textbf{W}=\sum_{i=1}^N \alpha_i y_i y_2 (K(\textbf{X}_2,\textbf{X}_i)-K(\textbf{X}_1,\textbf{X}_i)), \\
& \textbf{W}^\intercal\textbf{W}=\sum_{i,j=1}^N \alpha_i \alpha_j y_i y_j K(\textbf{X}_i,\textbf{X}_j).
\end{split}
\end{equation}

Some possible choices of $K$ include
\begin{equation*}
\begin{split}
&\rm{Linear \ kernel:} \qquad K(\textbf{X},\textbf{Y})=\textbf{X}^\intercal \textbf{Y}+\alpha \textbf{I}_{n \times n}, \\
&\rm{Polynomial \ kernel:} \quad K(\textbf{X},\textbf{Y})= (\textbf{X}^\intercal \textbf{Y}+\alpha \textbf{I}_{n \times n})^{\circ \beta}, \\
&\rm{Gaussian \ kernel:} \quad  K(\textbf{X},\textbf{Y})=[\exp(-\gamma \|\textbf{X}(:,i)-\textbf{Y}(:,j)\|^2)]_{n \times n},
\end{split}
\end{equation*}
where $\alpha\geq 0, \beta \in \mathbb{N}, \gamma >0, \textbf{X}, \textbf{Y} \in \mathcal{H}=\mathbb{C}^{m \times n}$. $\textbf{X}(:,i)$ is the $i$-th column of \textbf{X} and $\circ$ is the Hadamard product \citep{horn1990hadamard}.

Additionally, if $\Phi$ is an identical mapping with $K(\textbf{X}_i,\textbf{X}_j)= \textbf{X}_i^\intercal\textbf{X}_j$, the optimization problem will degenerate into a linear one. 

\subsection{Generalization Bounds for KSMM}\label{gb}
In this section, we use Rademacher complexity to obtain generalization bounds for soft-SVM and STM with Frobenius norm constraint. We will show how this leads to generalization bounds for KSMM.

To simplify the notation, we denote
\begin{equation*}
\mathcal{F}= \ell \circ \mathcal{H}_p=\{ z \mapsto \ell(h,z): z \in \mathcal{Z}, h \in \mathcal{H}_p\},
\end{equation*}
where $\mathcal{Z}$ is a domain,  $\mathcal{H}_p$ is a hypothesis class and $\ell$ is a loss function. Given $f \in \mathcal{F}$, we define
\begin{equation*}
L_{\mathcal{D}}(f)= \mathbb{E}_{z \thicksim \mathcal{D}} [f(z)], \quad L_{\mathcal{S}}(f)=\frac{1}{N} \sum_{i=1}^N f(z_i),
\end{equation*}
where $\mathcal{D}$ is the distribution of elements in $\mathcal{Z}$, $\mathcal{S}$ is the training set and $N$ is the number of examples in $\mathcal{S}$.

We repeat the symbols and assumptions in Sect \ref{lksmm} for further study. A STM problem in matrix space can be reformulated as:
\begin{equation}
\begin{split}
  &\min_{\textbf{W},b,\bm{\xi},} \ \frac{1}{2}\|\textbf{W}\|^2 + C \sum_{i=1}^N \xi_i \\
  &s.t. \ y_i(\langle \textbf{W},\textbf{X}_i\rangle+b)\geq 1-\xi_i, \ 1 \leq i \leq N \\
  &\quad \ \ \bm{\xi} \geq 0,
\end{split}
\end{equation}
where $\textbf{W}, \{\textbf{X}_i\}_{i=1}^N \in \mathbb{R}^{m \times n}$ and $\|\cdot\|$ is the Frobenius norm.

Consider the vector as a specialization of matrix that the number of its row or column is equal to one, we rewrite the theorem from \cite{Shalev2014Understanding} in the following way. It bounds the generalization error for SVM and STM(for matrix data) of all predictors in $\mathcal{H}_p$ using their empirical error.

\begin{theorem}\label{stmb}
Suppose that $\mathcal{D}$ is a distribution over $\mathcal{X} \times \mathcal{Y}$ such that with probability 1 we have that $\|\emph{\textbf{X}}\| \leq R$. Let $\mathcal{H}_p=\{\emph{\textbf{W}}: \|\emph{\textbf{W}}\| \leq B\}$ and let $\ell: \mathcal{H}_p \times \mathcal{Z} \rightarrow \mathbb{R}$ be a loss function of the form
\begin{equation*}
\ell (\emph{\textbf{W}},(\emph{\textbf{X}},y))=\Phi(\langle \emph{\textbf{W}},\emph{\textbf{X}}\rangle,y),
\end{equation*}
such that for all $y \in \mathcal{Y}$, $a \mapsto \Phi(a,y)$ is a $\rho$-Lipschitz function and $\max_{a \in [-BR,BR]} |\Phi(a,y)| \leq c$. Then, for any $\delta \in (0,1)$, with probability of at least $1-\delta$ over the choice of an i.i.d. sample of size N,
\begin{equation}
\forall \ \emph{\textbf{W}} \in \mathcal{H}_p, L_{\mathcal{D}}(\emph{\textbf{W}}) \leq L_{\mathcal{S}}(\emph{\textbf{W}})+\frac{2 \rho B R}{\sqrt{N}}+c\sqrt{\frac{2 \ln(2/\delta)}{N}}.
\end{equation}
\end{theorem}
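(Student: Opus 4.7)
The plan is to mirror the classical Rademacher-complexity proof for soft-margin SVM and observe that the Frobenius-norm/inner-product structure on $\mathbb{R}^{m\times n}$ is isometric (via vectorization) to the Euclidean $\ell_2$ structure on $\mathbb{R}^{mn}$. This reduces the matrix setting to a vector setting where all the classical tools apply with no essential change. Concretely, identify each $\textbf{W}\in\mathbb{R}^{m\times n}$ with its vectorization $\mathrm{vec}(\textbf{W})\in\mathbb{R}^{mn}$; then $\langle\textbf{W},\textbf{X}\rangle=\mathrm{vec}(\textbf{W})^{\intercal}\mathrm{vec}(\textbf{X})$ and $\|\textbf{W}\|=\|\mathrm{vec}(\textbf{W})\|_2$, so $\mathcal{H}_p$ is (isometrically) a Euclidean ball of radius $B$ and the data lie in a Euclidean ball of radius $R$.

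I would carry out three steps in order. First, invoke the standard Rademacher generalization theorem for bounded losses: since $|\Phi(a,y)|\leq c$ on the relevant range, with probability at least $1-\delta$,
\begin{equation*}
\forall\, f\in\mathcal{F},\ L_{\mathcal{D}}(f)\leq L_{\mathcal{S}}(f)+2\,\mathbb{E}_{\mathcal{S}}[\mathcal{R}(\mathcal{F}\circ\mathcal{S})]+c\sqrt{\tfrac{2\ln(2/\delta)}{N}},
\end{equation*}
where $\mathcal{R}$ denotes Rademacher complexity. Second, apply Talagrand's contraction lemma: because $a\mapsto\Phi(a,y)$ is $\rho$-Lipschitz, the Rademacher complexity of the composed class is controlled by that of the linear class,
\begin{equation*}
\mathcal{R}(\ell\circ\mathcal{H}_p\circ\mathcal{S})\leq \rho\,\mathcal{R}(\mathcal{H}_p\circ\mathcal{S}).
\end{equation*}
Third, bound the Rademacher complexity of the linear class by Cauchy--Schwarz and Jensen:
\begin{equation*}
\mathcal{R}(\mathcal{H}_p\circ\mathcal{S})=\frac{1}{N}\,\mathbb{E}_{\sigma}\sup_{\|\textbf{W}\|\leq B}\Bigl\langle\textbf{W},\sum_{i=1}^{N}\sigma_i\textbf{X}_i\Bigr\rangle\leq\frac{B}{N}\,\mathbb{E}_{\sigma}\Bigl\|\sum_{i=1}^{N}\sigma_i\textbf{X}_i\Bigr\|\leq\frac{B}{N}\sqrt{\mathbb{E}_{\sigma}\Bigl\|\sum_{i=1}^{N}\sigma_i\textbf{X}_i\Bigr\|^{2}},
\end{equation*}
and then use the independence and zero-mean properties of the Rademacher variables $\sigma_i$ to get $\mathbb{E}_{\sigma}\|\sum_i\sigma_i\textbf{X}_i\|^2=\sum_i\|\textbf{X}_i\|^2\leq NR^2$, yielding $\mathcal{R}(\mathcal{H}_p\circ\mathcal{S})\leq BR/\sqrt{N}$. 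Plugging the three bounds together produces the stated inequality.

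The only substantive point is that all three ingredients (the generalization-from-Rademacher master bound, Talagrand's contraction, and the $BR/\sqrt{N}$ bound for linear classes) are stated in the literature for vector-valued inputs, so one must justify their application to matrix inputs. The isometric identification with $\mathbb{R}^{mn}$ mentioned above handles this in one line, and no additional structure of the matrix formulation is needed. Hence there is no real obstacle; the main care is bookkeeping: making sure the loss stays in the range $[-BR,BR]$ where Lipschitzness and the uniform bound $c$ are valid (which follows from $|\langle\textbf{W},\textbf{X}\rangle|\leq\|\textbf{W}\|\|\textbf{X}\|\leq BR$ by Cauchy--Schwarz in Frobenius inner product), and keeping the constants $2\rho BR/\sqrt{N}$ and $c\sqrt{2\ln(2/\delta)/N}$ correctly matched to the theorem's form.
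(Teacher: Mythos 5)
Your proposal is correct and matches the paper's approach: the paper itself gives no separate proof of this theorem, simply citing \cite{Shalev2014Understanding} and noting that the Frobenius structure on matrices is handled exactly as the Euclidean structure on vectors, which is precisely your vectorization observation. Moreover, the three ingredients you use (the Rademacher master bound, the contraction lemma, and the Cauchy--Schwarz/Jensen bound giving $BR/\sqrt{N}$ for the linear class) are exactly Lemmas \ref{l1}--\ref{l3} that the paper deploys in Appendix \ref{Aa} to prove the analogous Theorem \ref{ksmmb}.
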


\begin{remark}
When $m=1$ or $n=1$, the matrix $\textbf{X} \in \mathbb{R}^{m \times n}$ transforms into a vector and its Frobenius norm is consistent with the corresponding Euclidean norm. The optimization problems in both classifiers are identical.
\end{remark}

In the case of KSMM, we have the following result where we denote by $\|\cdot\|_{\mathcal{H}}(\textbf{V})=(\langle \langle \cdot, \cdot \rangle_{\mathcal{H}}, \frac{\textbf{V}}{\|\textbf{V}\|} \rangle)^{1/2}$.

\begin{theorem}\label{ksmmb}
Suppose that $\mathcal{D}$ is a distribution over $\mathcal{H} \times \mathcal{Y}$ where $\mathcal{H}$ is a matrix Hilbert space such that with probability 1 we have that $\|\emph{\textbf{X}}\|_{\mathcal{H}}(\emph{\textbf{V}}) \leq R'$. Let $\mathcal{H}'_p=\{\emph{\textbf{W}}': \|\emph{\textbf{W}}'\|_{\mathcal{H}}(\emph{\textbf{V}}) \leq B'\}$ and let $\ell: \mathcal{H}'_p \times \mathcal{Z} \rightarrow \mathbb{R}$ be a loss function of the form
\begin{equation*}
\ell (\emph{\textbf{W}}',(\emph{\textbf{X}},y))=\Phi(\langle \langle \emph{\textbf{W}}',\emph{\textbf{X}}\rangle_{\mathcal{H}},\frac{\emph{\textbf{V}}}{\|\emph{\textbf{V}}\|}\rangle,y),
\end{equation*}
such that for all $y \in \mathcal{Y}$, $a \mapsto \Phi(a,y)$ is a $\rho$-Lipschitz function and $\max_{a \in [-B'R',B'R']} |\Phi(a,y)| \leq c'$. Then, for any $\delta \in (0,1)$, with probability of at least $1-\delta$ over the choice of an i.i.d. sample of size N,
\begin{equation}
\forall \ \emph{\textbf{W}}' \in \mathcal{H}'_p, L_{\mathcal{D}}(\emph{\textbf{W}}') \leq L_{\mathcal{S}}(\emph{\textbf{W}}')+\frac{2 \rho B' R'}{\sqrt{N}}+c'\sqrt{\frac{2 \ln(2/\delta)}{N}}.
\end{equation}
\end{theorem}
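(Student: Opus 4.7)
The plan is to reduce Theorem \ref{ksmmb} to the already-established Theorem \ref{stmb} by constructing an explicit isometric embedding of the $\mathbf{V}$-weighted inner-product space into an ordinary Frobenius inner-product space. First I would verify that $\mathbf{V}/\|\mathbf{V}\|$ is positive semidefinite: for every $\mathbf{v}\in\mathbb{R}^{n}$ the rank-one matrix $\mathbf{v}\mathbf{v}^{\intercal}$ can be realized as $\mathbf{X}^{\intercal}\mathbf{X}=\langle \mathbf{X},\mathbf{X}\rangle_{\mathcal{H}}$ by placing $\mathbf{v}^{\intercal}$ in one row of $\mathbf{X}$ and setting the other rows to zero, so assumption (\ref{ass1}) gives $\mathbf{v}^{\intercal}(\mathbf{V}/\|\mathbf{V}\|)\mathbf{v}=\langle \mathbf{v}\mathbf{v}^{\intercal},\mathbf{V}/\|\mathbf{V}\|\rangle\ge 0$ for all $\mathbf{v}$, and combined with the assumed symmetry this forces $\mathbf{V}/\|\mathbf{V}\|$ to be PSD. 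Writing its square-root factorization $\mathbf{V}/\|\mathbf{V}\|=\mathbf{L}\mathbf{L}^{\intercal}$ with $\mathbf{L}\in\mathbb{R}^{n\times r}$ and using cyclicity of the trace,
\begin{equation*}
\langle\langle \mathbf{A},\mathbf{B}\rangle_{\mathcal{H}},\,\mathbf{V}/\|\mathbf{V}\|\rangle=\tr(\mathbf{B}^{\intercal}\mathbf{A}\mathbf{L}\mathbf{L}^{\intercal})=\langle \mathbf{A}\mathbf{L},\mathbf{B}\mathbf{L}\rangle,
\end{equation*}
so $\Psi:\mathbf{A}\mapsto\mathbf{A}\mathbf{L}$ is an isometry from $(\mathbb{R}^{m\times n},\|\cdot\|_{\mathcal{H}}(\mathbf{V}))$ into $(\mathbb{R}^{m\times r},\|\cdot\|_{F})$ that carries the KSMM bilinear form onto the ordinary Frobenius inner product.

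With $\Psi$ in hand, I would apply Theorem \ref{stmb} on the image data. The almost-sure bound $\|\Psi(\mathbf{X})\|_{F}=\|\mathbf{X}\|_{\mathcal{H}}(\mathbf{V})\le R'$ and the hypothesis-class condition $\|\Psi(\mathbf{W}')\|_{F}=\|\mathbf{W}'\|_{\mathcal{H}}(\mathbf{V})\le B'$ are preserved by construction, and the loss becomes $\Phi(\langle \Psi(\mathbf{W}'),\Psi(\mathbf{X})\rangle_{F},y)$ with the same Lipschitz constant $\rho$ and the same bound $c'$ on $[-B'R',B'R']$. Because $\Psi(\mathcal{H}'_{p})$ sits inside the $B'$-ball of the Frobenius space $\mathbb{R}^{m\times r}$, Theorem \ref{stmb} applies directly and yields the claimed inequality with the universal constants $2\rho B'R'/\sqrt{N}$ and $c'\sqrt{2\ln(2/\delta)/N}$ intact. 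The main obstacle is thus the PSDness step together with the trace identity; once these are in hand, the whole Rademacher-complexity argument is inherited from Theorem \ref{stmb} without having to redo the contraction lemma or the McDiarmid-type concentration from scratch.
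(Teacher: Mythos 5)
Your proposal is correct, but it takes a genuinely different route from the paper. The paper (Appendix \ref{Aa}) reruns the Rademacher-complexity machinery from scratch in the matrix Hilbert space: it invokes the generic generalization bound (Lemma \ref{l1}) and the contraction lemma (Lemma \ref{l2}), and then proves a new Lemma \ref{l3} bounding $R(\mathcal{H}\circ\mathcal{S})$ for the class $\{\textbf{W}:\|\textbf{W}\|_{\mathcal{H}}(\textbf{V})\le 1\}$ by mimicking the classical Hilbert-space argument (sup via a Cauchy--Schwarz-type step, then Jensen and independence of the $\sigma_i$). You instead observe that the $\textbf{V}$-weighted form is a PSD symmetric bilinear form --- your verification that assumption (\ref{ass1}) forces $\textbf{V}/\|\textbf{V}\|\succeq 0$ via rank-one test matrices is sound --- factor it as $\textbf{L}\textbf{L}^{\intercal}$, and push everything through the isometry $\textbf{A}\mapsto\textbf{A}\textbf{L}$ so that Theorem \ref{stmb} applies verbatim to the image data. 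Both arguments are valid; yours is shorter and conceptually sharper, since it exposes that for the concrete inner product $\langle\textbf{A},\textbf{B}\rangle_{\mathcal{H}}=\textbf{A}^{\intercal}\textbf{B}$ the KSMM geometry is exactly an ordinary Frobenius geometry in the coordinates $\textbf{X}\textbf{L}$, and the PSD factorization also supplies the Cauchy--Schwarz inequality that the paper's Lemma \ref{l3} uses implicitly but never isolates. What the paper's direct route buys is that it stays at the level of the abstract matrix inner product (no reliance on the specific form $\textbf{A}^{\intercal}\textbf{B}$), and its template is reused essentially unchanged for the max-norm/1-norm bound of Theorem \ref{ksmmb2}, where no such isometric reduction to Theorem \ref{stmb} is available. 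If you wanted your argument to match the paper's stated generality, you would replace the explicit factorization by the abstract Gram factorization of the PSD form $(\textbf{A},\textbf{B})\mapsto\langle\langle\textbf{A},\textbf{B}\rangle_{\mathcal{H}},\textbf{V}/\|\textbf{V}\|\rangle$ on a finite-dimensional space; the rest of your reduction goes through unchanged.
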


\begin{proof}
See Appendix \ref{Aa}.
\end{proof}
\qed

The following theorem compare the generalization bounds with the same hinge-loss function $\Phi(a,y)=\max\{0,1-ay\}$.

\begin{theorem}\label{bd}
In the same domain of $\emph{\textbf{X}} \in \mathcal{X}$ and $\emph{\textbf{W}} \in \mathcal{H}_p$, we have $R' \leq R$, $B' \leq B$ and $c' \leq c$.
\end{theorem}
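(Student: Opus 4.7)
The plan is to reduce everything to a single geometric inequality: for every matrix $\textbf{Z}$ one has $\|\textbf{Z}\|_{\mathcal{H}}(\textbf{V}) \le \|\textbf{Z}\|_F$. Once this is available, $R' \le R$ and $B' \le B$ follow by applying it to $\textbf{X}$ (with probability one) and to $\textbf{W} \in \mathcal{H}_p$, respectively, because the hypotheses of Theorem \ref{stmb} give $\|\textbf{X}\|_F \le R$ and $\|\textbf{W}\|_F \le B$, so the tightest admissible constants in Theorem \ref{ksmmb} cannot exceed those in Theorem \ref{stmb}.

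To prove the key inequality I would start from the definition
\begin{equation*}
\|\textbf{Z}\|_{\mathcal{H}}^{2}(\textbf{V})
= \left\langle \textbf{Z}^{\intercal}\textbf{Z},\, \frac{\textbf{V}}{\|\textbf{V}\|}\right\rangle
\end{equation*}
and apply Cauchy--Schwarz with respect to the Frobenius inner product, obtaining the upper bound $\|\textbf{Z}^{\intercal}\textbf{Z}\|_F$ (since $\|\textbf{V}/\|\textbf{V}\|\|_F=1$). Writing the SVD of $\textbf{Z}$ with singular values $\sigma_1,\dots,\sigma_r$, one has $\|\textbf{Z}\|_F^{2}=\sum_i \sigma_i^{2}$ and $\|\textbf{Z}^{\intercal}\textbf{Z}\|_F = (\sum_i \sigma_i^{4})^{1/2}$. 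The elementary inequality $\sum_i \sigma_i^{4} \le (\sum_i \sigma_i^{2})^{2}$ (a consequence of expanding the square and dropping the non-negative cross terms) then gives $\|\textbf{Z}^{\intercal}\textbf{Z}\|_F \le \|\textbf{Z}\|_F^{2}$, whence $\|\textbf{Z}\|_{\mathcal{H}}(\textbf{V}) \le \|\textbf{Z}\|_F$, as required.

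For the last assertion I would specialize to the hinge loss $\Phi(a,y)=\max\{0,1-ay\}$, for which $|\Phi(a,y)| \le 1+|a|$ is sharp. Therefore the tightest constants compatible with the hypotheses of Theorems \ref{stmb} and \ref{ksmmb} are $c = 1+BR$ and $c' = 1+B'R'$ respectively. Combining with the already established $B' \le B$ and $R' \le R$ gives $B'R' \le BR$ and hence $c' \le c$, completing the comparison.

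The main obstacle is the scalar inequality $\|\textbf{Z}^{\intercal}\textbf{Z}\|_F \le \|\textbf{Z}\|_F^{2}$; everything else is bookkeeping. One subtlety is that the Cauchy--Schwarz step uses only $\|\textbf{V}/\|\textbf{V}\|\|_F = 1$ and never needs $\textbf{V}$ to be PSD, so the argument goes through under the weak condition (\ref{ass1}) alone and does not rely on the particular $\textbf{V}$ produced by the KKT system.
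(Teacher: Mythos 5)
Your proposal is correct and follows essentially the same route as the paper: both reduce everything to the key inequality $\|\textbf{Z}\|_{\mathcal{H}}(\textbf{V}) \le \|\textbf{Z}\|$, proved by a Cauchy--Schwarz step against $\textbf{V}/\|\textbf{V}\|$ (giving the bound $\|\textbf{Z}^{\intercal}\textbf{Z}\|$) followed by $\|\textbf{Z}^{\intercal}\textbf{Z}\| \le \|\textbf{Z}\|^{2}$, and then read off $R' \le R$, $B' \le B$ and $c' \le c$. The only cosmetic differences are that the paper establishes $\|\textbf{Z}^{\intercal}\textbf{Z}\| \le \|\textbf{Z}\|^{2}$ entrywise by Cauchy--Schwarz on the columns rather than through the singular values, and deduces $c' \le c$ from the containment $[-B'R',B'R'] \subseteq [-BR,BR]$ (which works for any loss) instead of from the explicit hinge-loss constants $1+BR$ and $1+B'R'$.
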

\begin{proof}
For any $\textbf{X} \in \mathbb{R}^{m \times n}$,
\begin{equation}\label{HleF}
\begin{split}
\|\textbf{X}\|_{\mathcal{H}}^2(\textbf{V}) &=\langle\langle \textbf{X},\textbf{X} \rangle_{\mathcal{H}},\frac{\textbf{V}}{\|\textbf{V}\|}\rangle=\langle\textbf{X}^{\intercal}\textbf{X},\frac{\textbf{V}}{\|\textbf{V}\|}\rangle=\frac{1}{\|\textbf{V}\|}\sum_{p,q=1}^n \sum_{i=1}^m x_{i p} x_{i q} v_{p q} \\
& \leq \frac{1}{\|\textbf{V}\|} \sqrt{(\sum_{p,q=1}^n (\sum_{i=1}^m x_{i p} x_{i q})^2)(\sum_{p,q=1}^n v_{p q}^2)}= \sqrt{(\sum_{p,q=1}^n (\sum_{i=1}^m x_{i p} x_{i q})^2)} \\
& \leq  \sqrt{(\sum_{p,q=1}^n (\sum_{i=1}^m x_{i p}^2)  (\sum_{i=1}^m x_{i q}^2))}= \sqrt{(\sum_{p=1}^n \sum_{i=1}^m x_{i p}^2)  (\sum_{q=1}^n \sum_{i=1}^m x_{i q}^2)} \\
& =\|\textbf{X}\|^2.
\end{split}
\end{equation}
Thus, $R' \leq R$ and so does $B' \leq B$. With $R' \leq R, B' \leq B$, we have $[-B'R',B'R'] \subseteq [-BR,BR]$ and $c' \leq c$.
\end{proof}
\qed

Theorem \ref{bd} suggests that under the same probability, the difference between the true error and empirical error of KSMM is smaller than that of STM. In other words, if we pick up a moderate kernel with better performance on training set within our method, it is more likely to predict a better result on the test step.

On the other hand, normally we do not obtain prior knowledge of the space $\mathcal{H}$, especially for the choice of matrix \textbf{V}. We consider the following general 1-norm  and max norm constraint formulation for matrices where $\|\textbf{X}\|_1=\max\limits_{1 \leq j \leq n} \sum\limits_{i=1}^m |x_{ij}|$ and $\|\textbf{X}\|_{\max}=\max\limits_{1 \leq i,j \leq n} |x_{ij}|$ for $\textbf{X} \in \mathbb{R}^{m \times n}$. The following theorem bounds the generalization error of all predictors in $\mathcal{H}_p$ using their empirical error.

\begin{theorem}\label{ksmmb2}
Suppose that $\mathcal{D}$ is a distribution over $\mathcal{H} \times \mathcal{Y}$ where $\mathcal{H}$ is a matrix Hilbert space such that with probability 1 we have that $\|\emph{\textbf{X}}\|_1 \leq R$. Let $\mathcal{H}_p=\{\emph{\textbf{W}}: \|\emph{\textbf{W}}\|_{\max} \leq B\}$ and let $\ell: \mathcal{H}_p \times \mathcal{Z} \rightarrow \mathbb{R}$ be a loss function of the form
\begin{equation*}
\ell (\emph{\textbf{W}},(\emph{\textbf{X}},y))=\Phi(\langle \langle \emph{\textbf{W}},\emph{\textbf{X}}\rangle_{\mathcal{H}},\frac{\emph{\textbf{V}}}{\|\emph{\textbf{V}}\|}\rangle,y),
\end{equation*}
such that for all $y \in \mathcal{Y}$, $a \mapsto \Phi(a,y)$ is a $\rho$-Lipschitz function and $\max_{a \in [-BR,BR]} |\Phi(a,y)| \leq c$. Then, for any $\delta \in (0,1)$, with probability of at least $1-\delta$ over the choice of an i.i.d. sample of size N,
\begin{equation}
\forall \ \emph{\textbf{W}} \in \mathcal{H}_p, L_{\mathcal{D}}(\emph{\textbf{W}}) \leq L_{\mathcal{S}}(\emph{\textbf{W}})+2 \rho B R n \sqrt{\frac{2(m\ln2+\ln n)}{N}}+c\sqrt{\frac{2 \ln(2/\delta)}{N}}.
\end{equation}
\end{theorem}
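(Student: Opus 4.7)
The argument has the same high-level structure as the proof of Theorem \ref{ksmmb}: a generic Rademacher-based uniform deviation bound, together with Talagrand's contraction lemma, reduces everything to bounding the Rademacher complexity of the linear class
\begin{equation*}
\mathcal{F}_p \;:=\; \{\textbf{X}\mapsto \langle \langle \textbf{W},\textbf{X}\rangle_{\mathcal{H}},\bar{\textbf{V}}\rangle : \|\textbf{W}\|_{\max}\leq B\},
\end{equation*}
where $\bar{\textbf{V}}:=\textbf{V}/\|\textbf{V}\|$ so $\|\bar{\textbf{V}}\|_F=1$. The generic step and the $\rho$-Lipschitz contraction step contribute the prefactor $2\rho$ and the tail $c\sqrt{2\ln(2/\delta)/N}$ exactly as in Theorem \ref{ksmmb}. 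All that is new is the estimate
\begin{equation*}
\mathcal{R}_{\mathcal{S}}(\mathcal{F}_p)\;\leq\; BRn\sqrt{\tfrac{2(m\ln 2+\ln n)}{N}}.
\end{equation*}

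To bound $\mathcal{R}_{\mathcal{S}}(\mathcal{F}_p)$ I would first rewrite the functional as a pure linear form on $\textbf{X}$: using $\langle \textbf{W},\textbf{X}\rangle_{\mathcal{H}} = \textbf{W}^\intercal \textbf{X}$,
\begin{equation*}
\langle\langle \textbf{W},\textbf{X}\rangle_{\mathcal{H}},\bar{\textbf{V}}\rangle = \langle \textbf{W}\bar{\textbf{V}}^\intercal,\textbf{X}\rangle = \sum_{k,q} c_{k,q} X_{k q},\qquad c_{k,q}:=(\textbf{W}\bar{\textbf{V}}^\intercal)_{k q}.
\end{equation*}
Combining $\|\textbf{W}\|_{\max}\leq B$ with $\|\bar{\textbf{V}}\|_F=1$ via two Cauchy--Schwarz steps yields the key estimate
\begin{equation*}
\sum_{q=1}^n \max_{k\in[m]}|c_{k,q}| \;\leq\; B\sum_q \|\bar{\textbf{V}}_{:,q}\|_1 \;\leq\; B\sqrt{n}\sum_q \|\bar{\textbf{V}}_{:,q}\|_2 \;\leq\; Bn\|\bar{\textbf{V}}\|_F = Bn.
\end{equation*}

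Next, introduce the finite base class $\mathcal{G}:=\{\phi_{q,s}(\textbf{X}):=s^\intercal\textbf{X}_{:,q}:q\in[n],\,s\in\{-1,+1\}^m\}$ of cardinality $2^m n$. Because each column $c_{:,q}$ equals $\max_k|c_{k,q}|$ times a point of $[-1,1]^m = \mathrm{conv}(\{-1,+1\}^m)$, the previous display implies $\mathcal{F}_p \subseteq Bn\cdot\mathrm{conv}(\mathcal{G}\cup\{0\})$. Since $|\phi_{q,s}(\textbf{X})|\leq\|\textbf{X}_{:,q}\|_1\leq\|\textbf{X}\|_1\leq R$, each empirical vector $(\phi_{q,s}(\textbf{X}_i))_{i=1}^N$ has Euclidean norm at most $R\sqrt{N}$, and Massart's finite-class lemma gives
\begin{equation*}
\mathcal{R}_{\mathcal{S}}(\mathcal{G}) \;\leq\; R\sqrt{\tfrac{2\ln(2^m n)}{N}} \;=\; R\sqrt{\tfrac{2(m\ln 2+\ln n)}{N}}.
\end{equation*}
Invariance of Rademacher complexity under convex hull and linear rescaling gives $\mathcal{R}_{\mathcal{S}}(\mathcal{F}_p)\leq Bn\cdot\mathcal{R}_{\mathcal{S}}(\mathcal{G})$, and substituting back into the generic bound delivers the claim.

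\textbf{Main obstacle.} The delicate point is extracting the logarithm $(m\ln 2+\ln n)$ rather than the $mn\ln 2$ produced by a naive Massart over the $2^{mn}$ vertices of the full max-norm ball. The remedy is the column-wise decomposition plus the estimate $\sum_q\max_k|c_{k,q}|\leq Bn$, which effectively replaces the whole box by a union of $n$ lower-dimensional $m$-cubes. Obtaining the prefactor \emph{exactly} $Bn$ (not $Bn\sqrt{n}$ or $Bmn$) requires the two Cauchy--Schwarz steps to compose correctly with the Frobenius normalisation of $\bar{\textbf{V}}$; verifying this, and confirming that the convex-hull / rescaling passage to $\mathcal{F}_p$ does not leak additional constants, is the main bookkeeping step to check.
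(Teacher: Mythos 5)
Your proposal is correct and follows essentially the same route as the paper's Appendix B: both arguments reduce the bound to a Massart-lemma estimate over the same finite family of $2^m n$ objects indexed by a column $q\in[n]$ and a sign pattern in $\{\pm1\}^m$, with the factor $n$ arising from playing the max-norm constraint on $\textbf{W}$ against the Frobenius normalisation of $\textbf{V}$. The only (cosmetic) difference is packaging: the paper bounds $\mathbb{E}_{\bm\sigma}\|\sum_i\sigma_i\textbf{X}_i\|_1$ directly via the inequality $\|\textbf{W}^\intercal\textbf{Y}\|\leq n\|\textbf{W}\|_{\max}\|\textbf{Y}\|_1$, whereas you absorb $\textbf{V}$ into the weight matrix and invoke invariance of Rademacher complexity under convex hulls, which amounts to the same duality.
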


\begin{proof}
See Appendix \ref{Ab}.
\end{proof}
\qed

Therefore, we have two bounds given in Theorem \ref{ksmmb} and Theorem \ref{ksmmb2} of KSMM. Apart from the extra $n\ln(n)$ factor, they look in a similar way. These two theorems are constrained to different prior knowledge, one captures low $\mathcal{H}$-norm assumption while the latter is limited to low max norm on \textbf{W} and low 1-norm on \textbf{X}. Note that there is no limitation on the dimension of \textbf{W} to derive the bounds in which kernel methods can be naturally applied.

\subsection{Analysis of KSMM versus other methods}\label{discuss}

We discuss the differences of MRMLKSVM \citep{gao2015multiple}, SVM, STM, SHTM \citep{hao2013linear}, DuSK \citep{he2014dusk} and our new method as follows:

DuSK, which uses CP decomposition and a dual-tensorial mapping to derive a tensor kernel, is a generalization of SHTM. KSMM constructs a matrix-based hyperplane with Newton's method applied in the process of seeking appropriate parameters. All the optimization problems mentioned above only need to be solved once. Based on the alternating projection method, STM, MRMLKSVM need to be solved iteratively, which consume much more time. For a set of matrix samples $\{\textbf{X}_i \in \mathbb{R}^{m \times n},y_i\}_{i=1}^N$, the memory space occupied by SVM is $O((N+1)mn+1)$, STM requires $O(Nmn+m+n+1)$, DuSK requires $O((N+1) r (m+n)+1)$, MRMLKSVM requires $O(Nmn+r(m+n)+1)$ and KSMM requires $O((N+2)mn+1)$, where $r$ is the rank of matrix. KSMM calculates weight matrix \textbf{V} to determine the relative importance of each hyperplane on the average.

Naturally STM is a multilinear support vector machine using different hyperplanes to separate the projections of data points. KSMM is a nonlinear supervised tensor learning and construct a single hyperplane in the matrix Hilbert space.

From the previous work \citep{chu2007map}, we know that the computational complexity of SVM is $O(N^2 mn)$, thus STM is $O(2N^2 T_1 mn)$, DuSK is $O(N^2 r^2(m+n))$, MRMLKSVM is $O(2N^2 T_2 n^2)$, while the complexity of KSMM is $O(N^2 P mn^2)$, where $\{T_i\}_{i=1,2}$ is the corresponding number of iterations and $P$ is the average number of iterations of Newton's method, which is usually small in practice. Moreover, its complexity can be narrowed for the optimal time complexity of multiplication of square matrices has been $O(n^{2.3728639})$ up to now \citep{le2014powers}.

\section{Experiments}\label{experiment}

In this section, we conduct one simulation study on synthetic data and four experiments on benchmark datasets. We validate the effectiveness of KSMM with other methodologies (DuSK \citep{he2014dusk}, Gaussian-RBF, matrix kernel \citep{Gao2014NLS} on SVM or STM classifier, SMM \citep{Luo2015Support}), since they have been proven successful in various applications.

We introduce two comparison of methods to verify our claims about the improvement of the proposed approach. We report the accuracy which counts on the proportion of correct predictions, $F_1=2 \cdot \frac{Pre \times Rec}{Pre+Rec}$ as the harmonic mean of precision and recall. Precision is the fraction of retrieved instances that are relevant, while recall is the fraction of relevant instances that are retrieved. In multiple classification problems, macro-averaged F-measure \citep{yang1999re} is adopted as the average of $F_1$ score for each category.

All experiments were conducted on a computer with Intel(R) Core(TM) i7 (2.50 GHZ) processor with 8.0 GB RAM memory. The algorithms were implemented in Matlab.

\subsection{Simulation study}
In order to get better insight of the proposed approach, we focus on the behavior of proposed methods for different attributes and given examples in binary classification problems. Datasets are subject to the Wishart distribution defined over symmetric, positive-definite matrix-valued random variables,  which is a generalization to multiple dimensions of the chi-squared distribution. Its probability density function is given by
\begin{equation*}
f(\textbf{X})=\frac{1}{2^{np/2} |\textbf{A}|^{n/2} \Gamma_p|\frac{n}{2}|} |\textbf{X}|^{(n-p-1)/2} e^{-\tr(\textbf{A}^{-1}\textbf{X})/2},
\end{equation*}
where \textbf{X} and $\textbf{A}$ are $p \times p$ symmetric, positive-definite matrices, $n$ is the number of degrees of freedom greater than $p-1$ and $\Gamma_p$ is the multivariate gamma function. The problem is verified with the following set-ups:

It is assumed that the considered objects are described by $10 \times 10, 20 \times 20, 30 \times 30, 40 \times 40$ and $50 \times 50$ matrices respectively. The attributes are generated independently with the Wishart distribution with $\textbf{A}=\textbf{u} \textbf{u}^{\intercal}, \textbf{u} \sim \mathcal{N}(\textbf{0},\textbf{I}_p)$, $n=p$ for the first class and $n=2p$ for the second class, for $p=10,\cdots,50$. Additional Gaussian white noise is considered while evaluation is performed with $N=100$ and 200 examples, half of which are selected as a training set while other examples are organized as a test set. For each setting we average results over 10 trials each of which is obtained from the proposed distribution. The input matrices are converted into vectors when it comes to SVM problems. All the kernels select the optimal trade-off parameter from $C \in \{10^{-2},10^{-1},\cdots,10^2\}$, kernel width parameter from $\sigma \in \{10^{-4},10^{-3},\cdots,10^4\}$ and rank from $r \in \{1,2,\cdots,10\}$. All the learning machines use the same training and test set. We first randomly sample $25\%$ of whole data from each dataset for the purpose of parameter selection. Gaussian RBF kernels are used on all MRMLKSVM, DuSK and SVM which denoted as $\rm MRMLKSVM_{RBF}$, $\rm DuSK_{RBF}$ and $\rm SVM_{RBF}$ respectively while we set $K(\textbf{X},\textbf{Y})=\textbf{X}^{\intercal}\textbf{Y}+\sigma$ in KSMM.

\begin{table}
\caption{Prediction performance in simulation study in terms of accuracy }
\begin{tabular*}{1\textwidth}{@{\extracolsep{\fill}}  llllllll}
\noalign{\smallskip}
\hline
\noalign{\smallskip}
N \ \ \ & p  \ \ \ & \multicolumn{5}{l}{Accuracy $(\%)$} \\
& & STM  \ \ \ \ \ \ \ & $\rm SVM_{RBF}$ \ \ \ \ & $\rm DuSK_{RBF}$ \ \ \ & $\rm MRMLKSVM_{RBF}$ & SMM & $\rm KSMM_{Linear}$  \\
\noalign{\smallskip}
\hline
\noalign{\smallskip}
50 & 10 & 76.4(6.7) & 80.8(6.3) & 81.2(5.2) & 80.4(5.0) & 81.2(5.1) & \textbf{83.2(6.1)} \\
& 15 & 82.8(8.9) & 89.2(4.1) & 90.0(1.8) & 90.4(3.4) & 87.9(3.5) & \textbf{92.4(2.7)} \\
& 20 & 85.6(5.9) & 89.6(3.4) & 88.8(6.4) & 88.8(4.3) & 88.6(2.5) & \textbf{91.2(2.4)} \\
& 25 & 88.0(4.2) & 92.8(1.0) & 91.6(2.0) & 93.6(0.8) & 92.2(3.2) & \textbf{94.0(1.3)} \\
& 30 & 84.8(2.0) & 89.6(3.4) & 92.4(2.9) & 93.2(3.2) & 92.8(2.2) & \textbf{95.2(2.0)} \\
\noalign{\smallskip}
\hline
\noalign{\smallskip}
100 & 10 & 81.8(4.9) & 86.0(3.2) & 85.0(4.1) & 79.4(7.3) & 81.4(3.2) & \textbf{86.6(2.9)} \\
& 15 & 85.2(3.2) & 89.0(3.2) & 87.8(2.1) & 87.6(3.0) & 86.7(1.2) & \textbf{89.2(2.1)} \\
& 20 & 84.8(6.7) & 90.2(2.1) & 90.0(2.1) & 91.0(2.0) & 89.5(1.4) & \textbf{91.2(2.5)} \\
& 25 & 89.4(3.2) & 92.0(1.1) & 91.6(2.3) & 93.2(1.5) & 93.1(2.1) & \textbf{93.4(2.2)} \\
& 30 & 86.8(7.4) & 93.6(3.4) & 92.2(2.1) & \textbf{94.8(2.9)} & 93.2(2.9) & \textbf{94.8(2.8)} \\
\noalign{\smallskip}
\hline
\label{Table1}
\end{tabular*}
\end{table}

%\begin{figure}
%\centering
%\subfigure[50 train]{
%\includegraphics[width=0.48\textwidth]{1a.eps}}
%\subfigure[100 train]{
%\includegraphics[width=0.48\textwidth]{1b.eps}}
%\caption{Average accuracy compared on multiple classifiers in simulation study: \textbf{a} for varying number of attributes with 100 examples,  \textbf{b} for varying number of attributes with 200 examples}
%\label{Fig1}
%\end{figure}

The results are presented in Table \ref{Table1}. We can observe that KSMM performs well in general. We are interested in accuracy in comparison and one way to understand this is to realize that our kernels are represented as matrices in calculation and Newton's method is included which occupies much more space and time. In addition, the observations demonstrate the size of training set has positive effect on the performance in most cases. KSMM has a significant performance even the sample size is small. When the training set is large enough, the accuracy is increasing along with the growing number of attributes. That is reasonable for the expectation values of examples in two classes are equal to $p\textbf{A}$ and $2p\textbf{A}$  respectively which make it easier to identify as $p$ increases. %We can see that the proposed KSMM outperforms other methods in general.

\subsection{Datasets and Discussion}
Next, we evaluate the performance of our classifier on real data sets coming from variety of domains. We consider the following benchmark datasets to perform a series of comparative experiments on multiple classification problems.
We use the ORL$32\times 32$\footnote{\url{http://www.cl.cam.ac.uk/research/dtg/attarchive/facedatabase.html}}  \citep{SamariaH94}, the Sheffield Face dataset\footnote{\url{https://www.sheffield.ac.uk/eee/research/iel/research/face}}, the Columbia Object Image Library (COIL-20)\footnote{\url{http://www.cs.columbia.edu/CAVE/software/softlib/coil-20.php}} \citep{Nene96columbiaobject} and the Binary Alphadigits\footnote{\url{http://www.cs.toronto.edu/~roweis/data.html}}. To better visualize the experimental data, we randomly choose a small subset for each database, as shown in Fig.~\ref{Fig2}. Table  \ref{Table2} summarizes the properties of all datasets used in our experiments.

\begin{table}
\caption{Statistics of datasets used in our experiments.}
\begin{tabular*}{1\textwidth}{@{\extracolsep{\fill}}  llll}
\noalign{\smallskip}
\hline
\noalign{\smallskip}
Dataset & \#Instances & \#Class & Size \\
\noalign{\smallskip}
\hline
\noalign{\smallskip}
ORL$32\times32$ & 400 & 40 & $32\times32$ \\
UMIST & 564 & 20 & $112\times92$ \\
COIL-20 & 1440 & 20 & $128\times 128$ \\
Binary Alphadigits & 1404 & 36 & $20\times16$ \\
\noalign{\smallskip}
\hline
\label{Table2}
\end{tabular*}
\end{table}

\begin{figure}
\centering
\subfigure[]{
\includegraphics[width=0.4\textwidth, height=42mm]{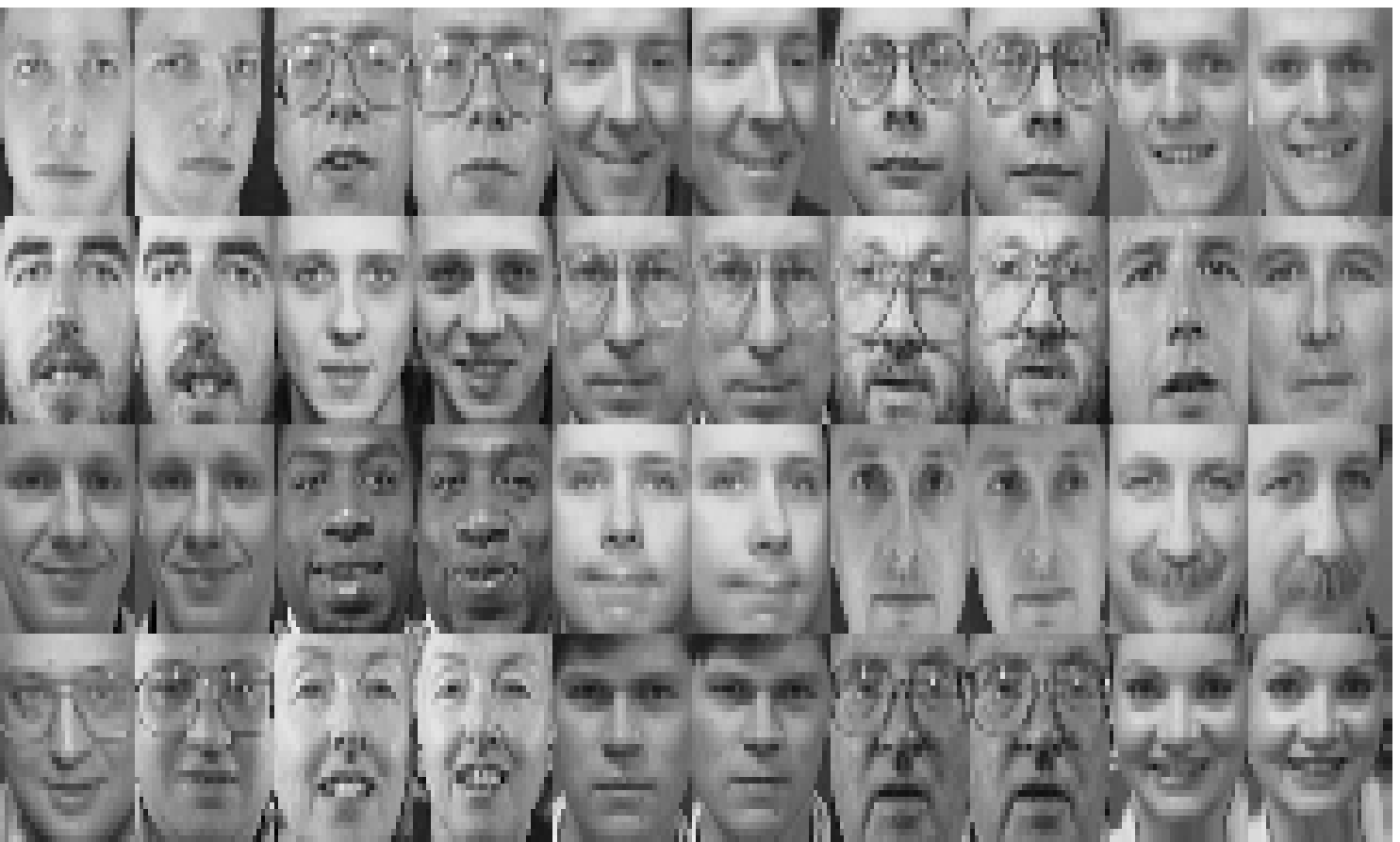}}
\subfigure[]{
\includegraphics[width=0.4\textwidth, height=42mm]{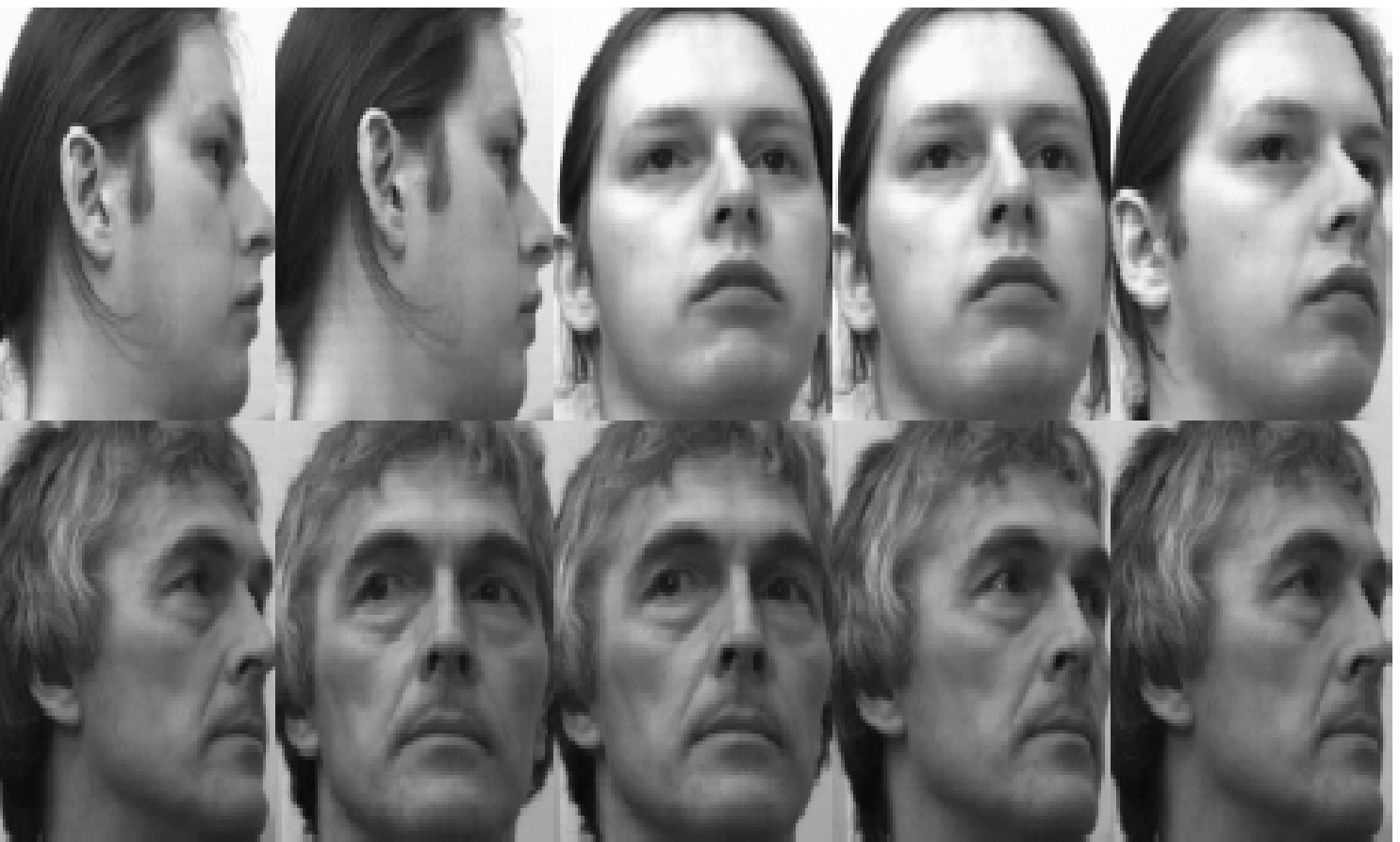}}
\subfigure[]{
\includegraphics[width=0.4\textwidth, height=42mm]{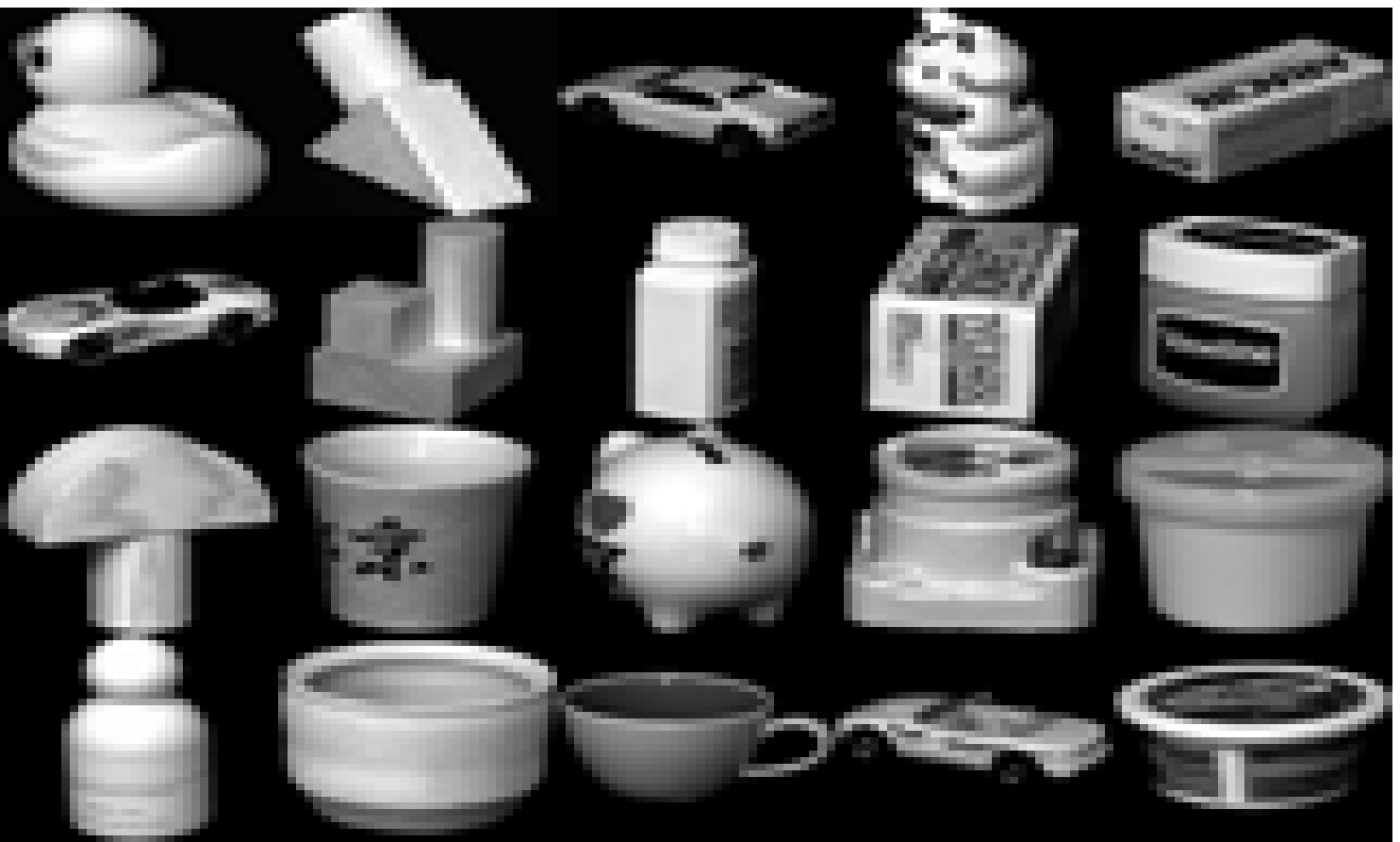}}
\subfigure[]{
\includegraphics[width=0.4\textwidth, height=42mm]{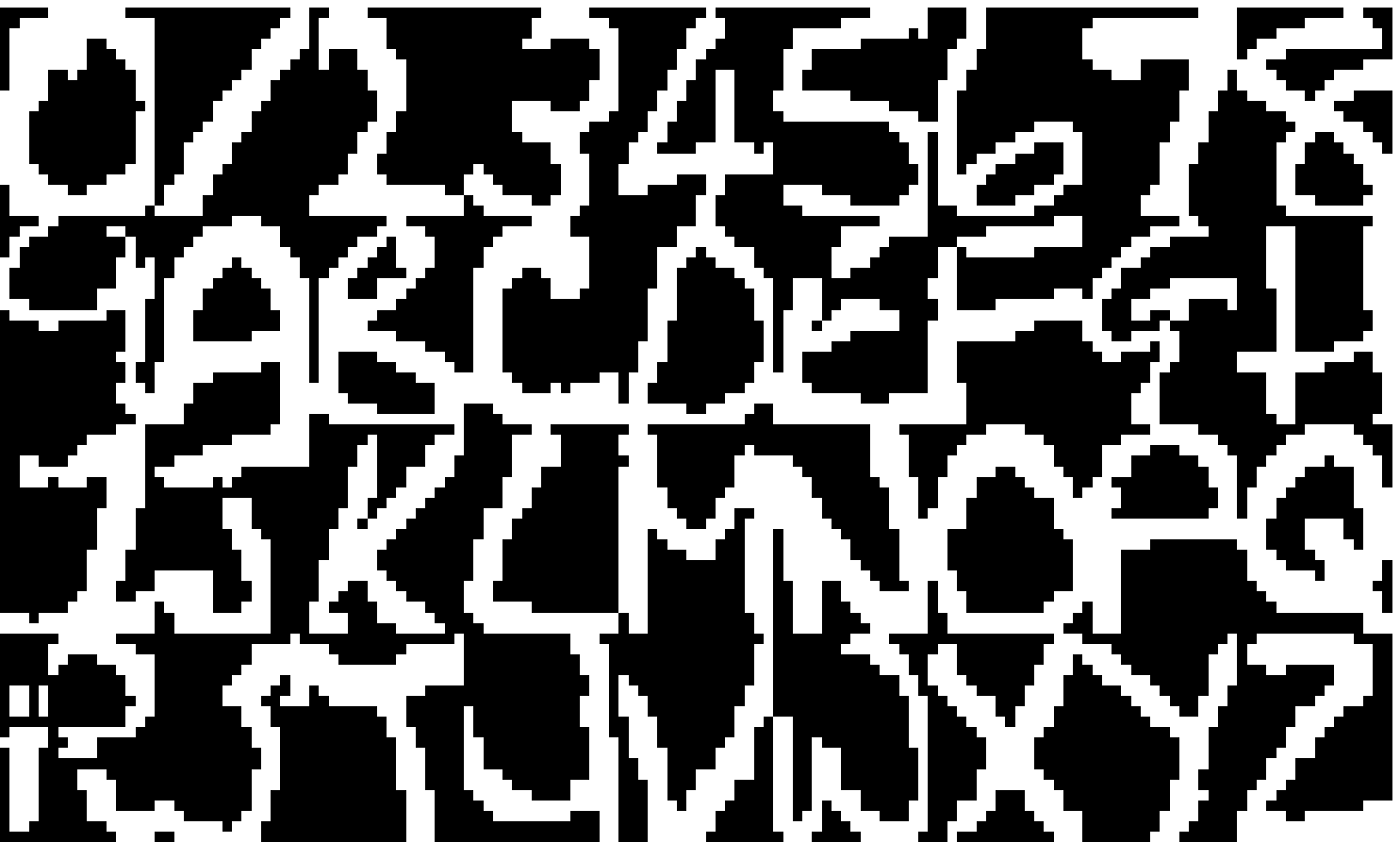}}
\caption{Example images for classification problems. \textbf{a} ORL$32\times 32$,  \textbf{b} Sheffield Face dataset, \textbf{c} Binary Alphadigits, \textbf{d} COIL-20}
\label{Fig2}
\end{figure}

The ORL$32\times 32$ contains 40 distinct subjects of each of ten different images with $32 \times 32$ pixels. For some subjects, the images were taken at different times, varying the lighting, facial expressions and facial details.
The Sheffield (previously UMIST) Face Database consists of 564 images of 20 individuals. Each individual is shown in a range of poses from profile to frontal views at the $112 \times 92$ field and images are numbered consecutively as they were taken.
The COIL-20 is a database of two sets of images in which 20 objects were placed on a motorized turntable against background. We use the second one of 1440 images with backgrounds discarded and sizes normalized, each of which has $128 \times 128$ pixels. We crop all images into $32 \times 32$ pixels to efficiently apply above algorithms.
The Binary Alphadigits is composed of digits of ``0'' to ``9'' and capital ``A'' to ``Z'' with $20\times16$ pixels, each of which has 39 examples.
In experiments, we randomly choose $50\%$ of images of each individual together as the training set and other images retained as test set for multiple classification.

Note that parameters of different algorithms are set as in the simulation study. For each setting we average results over 10 trials each of which are obtained from randomly divide each dataset into two subsets, one for training and one for testing. For multiple classification task, we use the strategy of one-against-one (1-vs-1) method. For KSMM, we set $K(\textbf{X},\textbf{Y})=[\exp(-\sigma \|\textbf{X}(:,i)-\textbf{Y}(:,j)\|^2)]_{n \times n}$ as the matrix kernel function. Due to the effectiveness of SMM in dealing with data matrices, we examine the convergence behavior in terms of the number of iterations of SMM and KSMM. 

\begin{figure}
\centering
\subfigure[]{
\includegraphics[width=0.8\textwidth]{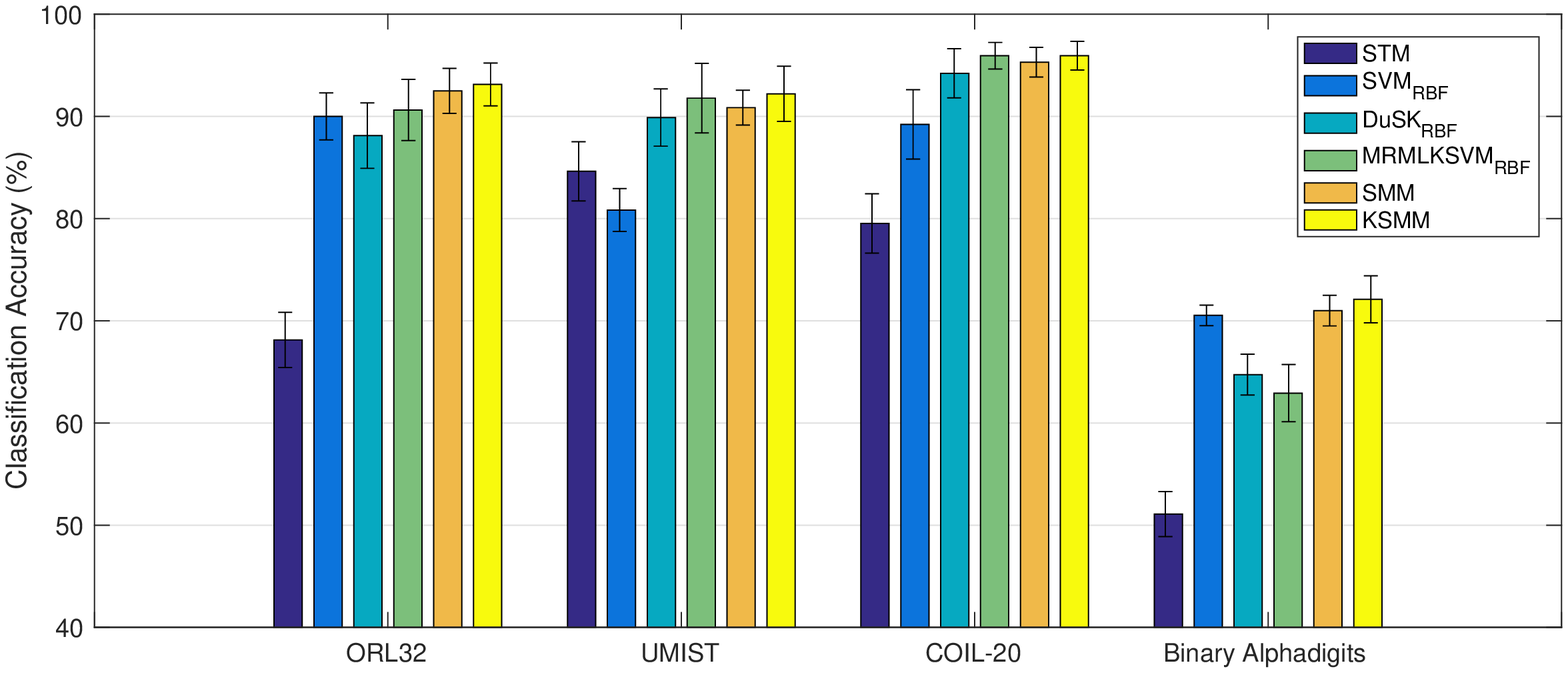}}
\subfigure[]{
\includegraphics[width=0.8\textwidth]{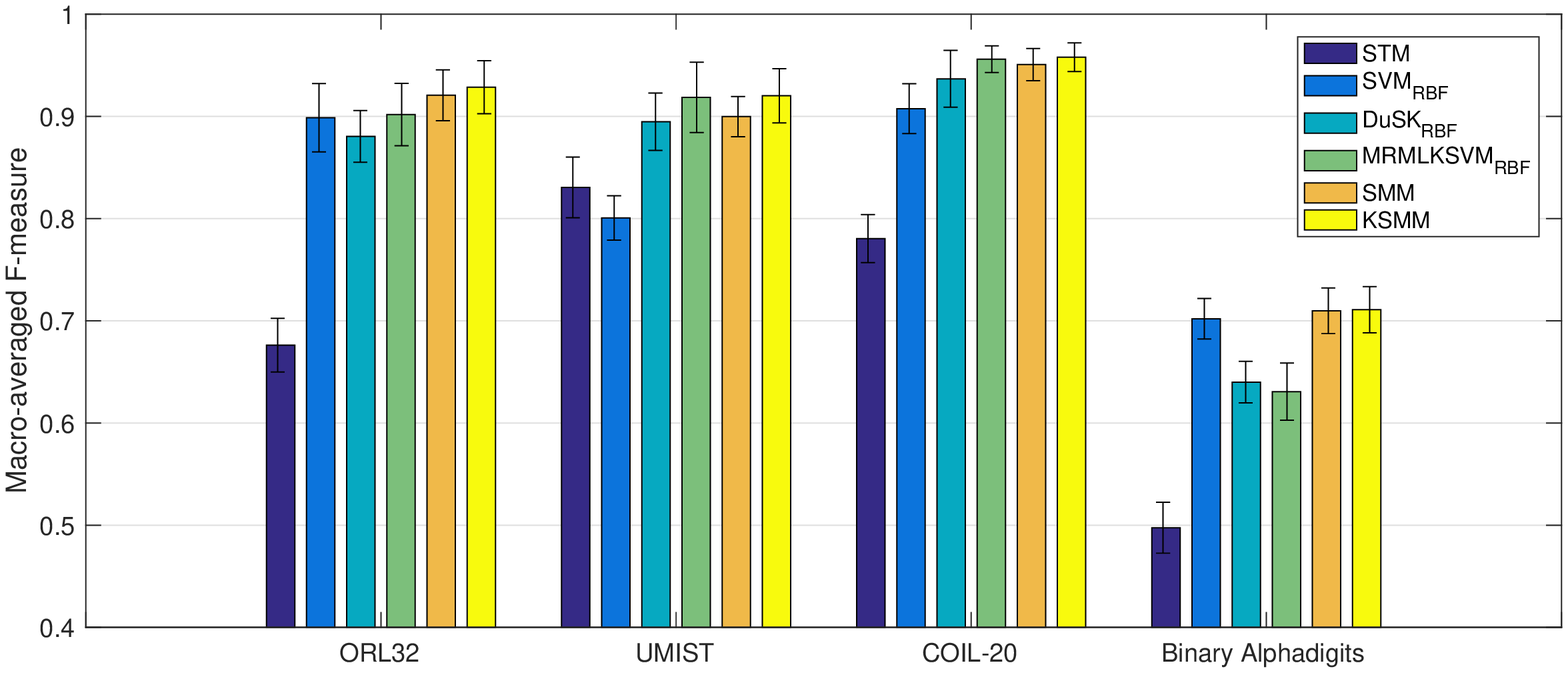}}
\caption{Accuracy and macro-averaged F-measure on benchmark datasets. We plot avg.accuracy($\%$) and F-measure $\pm$ standard error for certain classifiers. \textbf{a} Accuracy,  \textbf{b} F-meature}
\label{Fig3}
\end{figure}

Fig.~\ref{Fig3} and Fig.~\ref{Fig4} summarize experimental results for above datasets. Similar patterns of learning curves are observed in macro-averaged F-measure and accuracy, which shows that KSMM outperforms the baseline methods. We can see that KSMM obtains a better result though SMM exhibits a faster convergence than KSMM, which means that KSMM occupies more time. The SVM approach gives slightly worse result on UMIST, for structural information is broken by straightly convert matrices into vectors. It is worth noting that on Binary Alphadigits dataset it is very hard for classification algorithms to achieve satisfying accuracy since the dimension is low and some labels are rather difficult to identify, e.g. digit ``0'' and letter ``O'', digit ``1'' and letter ``I''. These results clearly show that KSMM can successfully deal with classification problems. One explanation of the outstanding performance of our method is due to each entry of the matrix inner product $\langle \textbf{W},\textbf{X}\rangle_{\mathcal{H}}$ measures a ``distance'' from \textbf{X} to a certain hyperplane. The final strategy focuses on the weighted summation of these values with weight matrix \textbf{V}. However, most methods in the literature tries to separate two classes upon one single hyperplane, even applying the magic of kernels to transform a nonlinear separable problem into a linear separable one in rather high dimension.

\begin{figure}
\centering
\subfigure[ORL32]{
\includegraphics[width=0.48\textwidth]{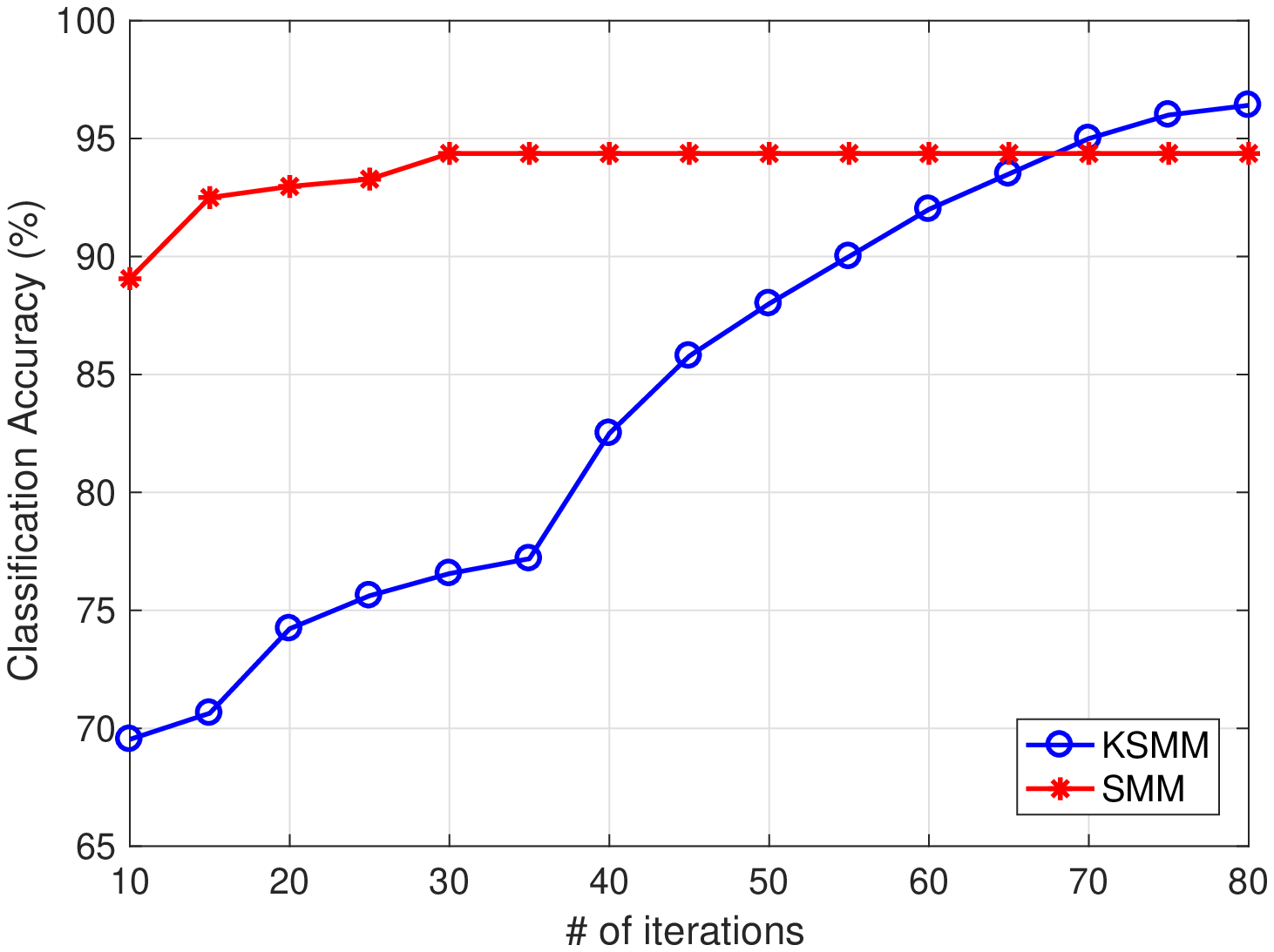}}
\subfigure[UMIST]{
\includegraphics[width=0.48\textwidth]{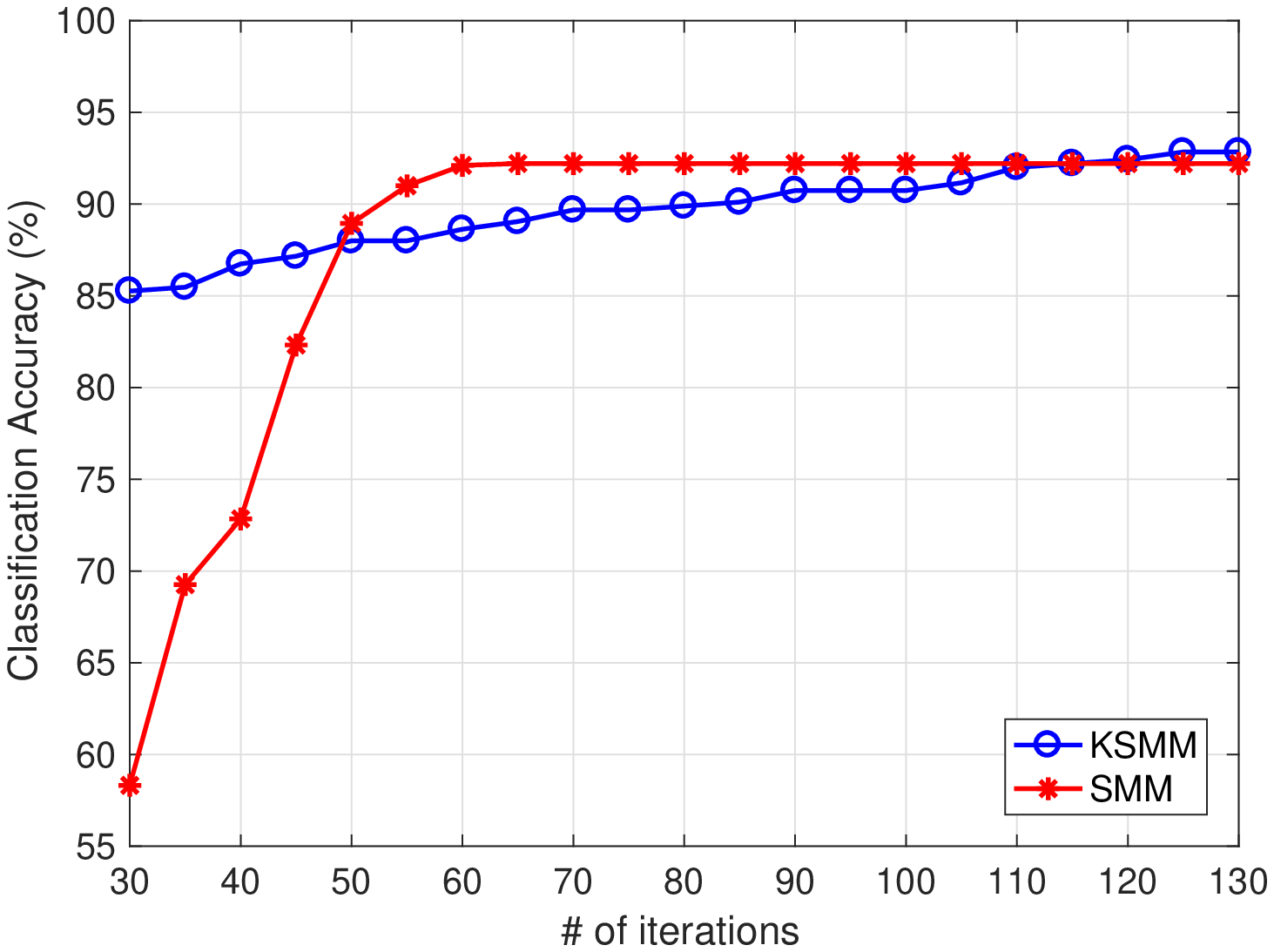}}
\subfigure[COIL-20]{
\includegraphics[width=0.48\textwidth]{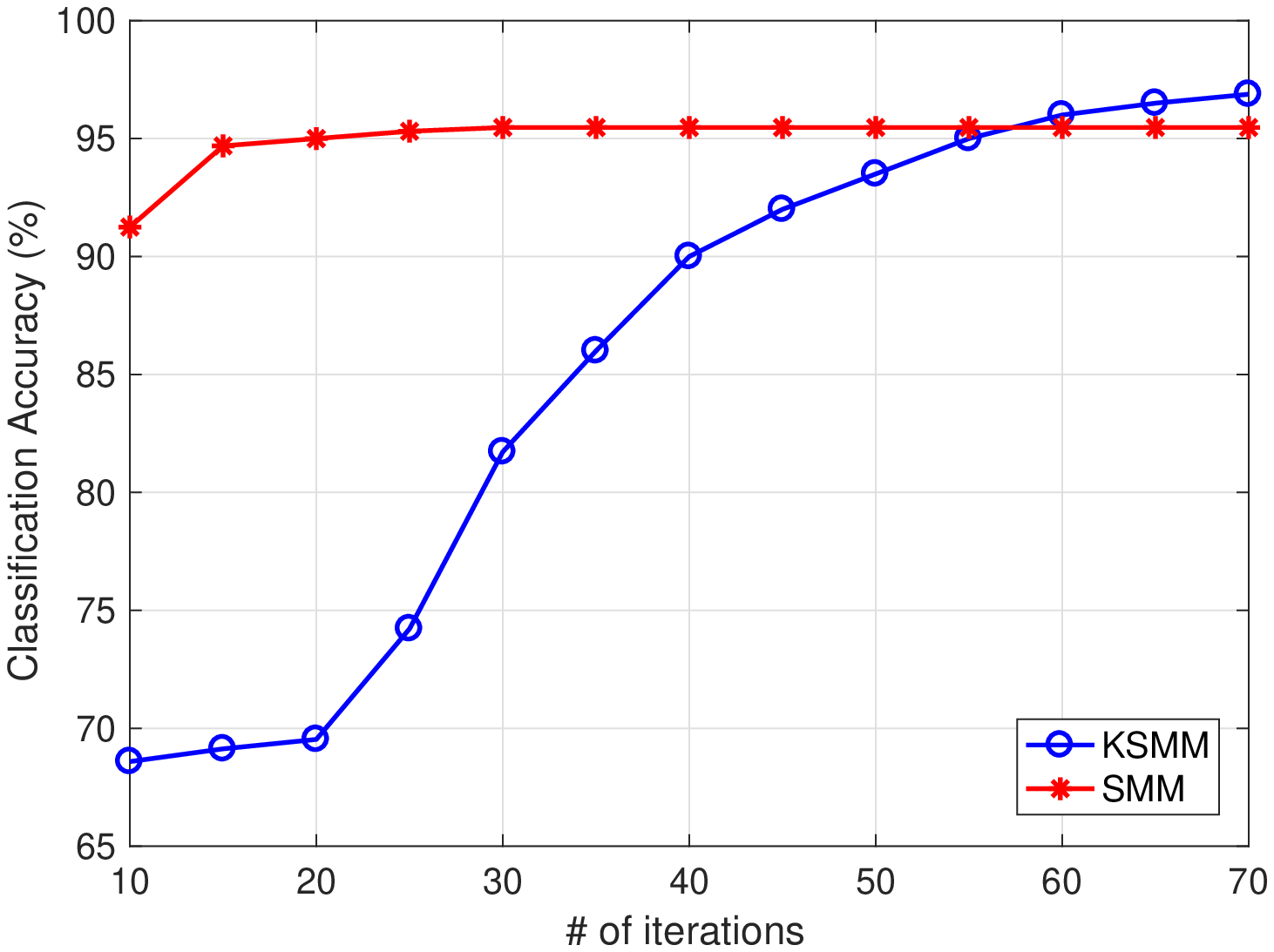}}
\subfigure[Binary Alphadigits]{
\includegraphics[width=0.48\textwidth]{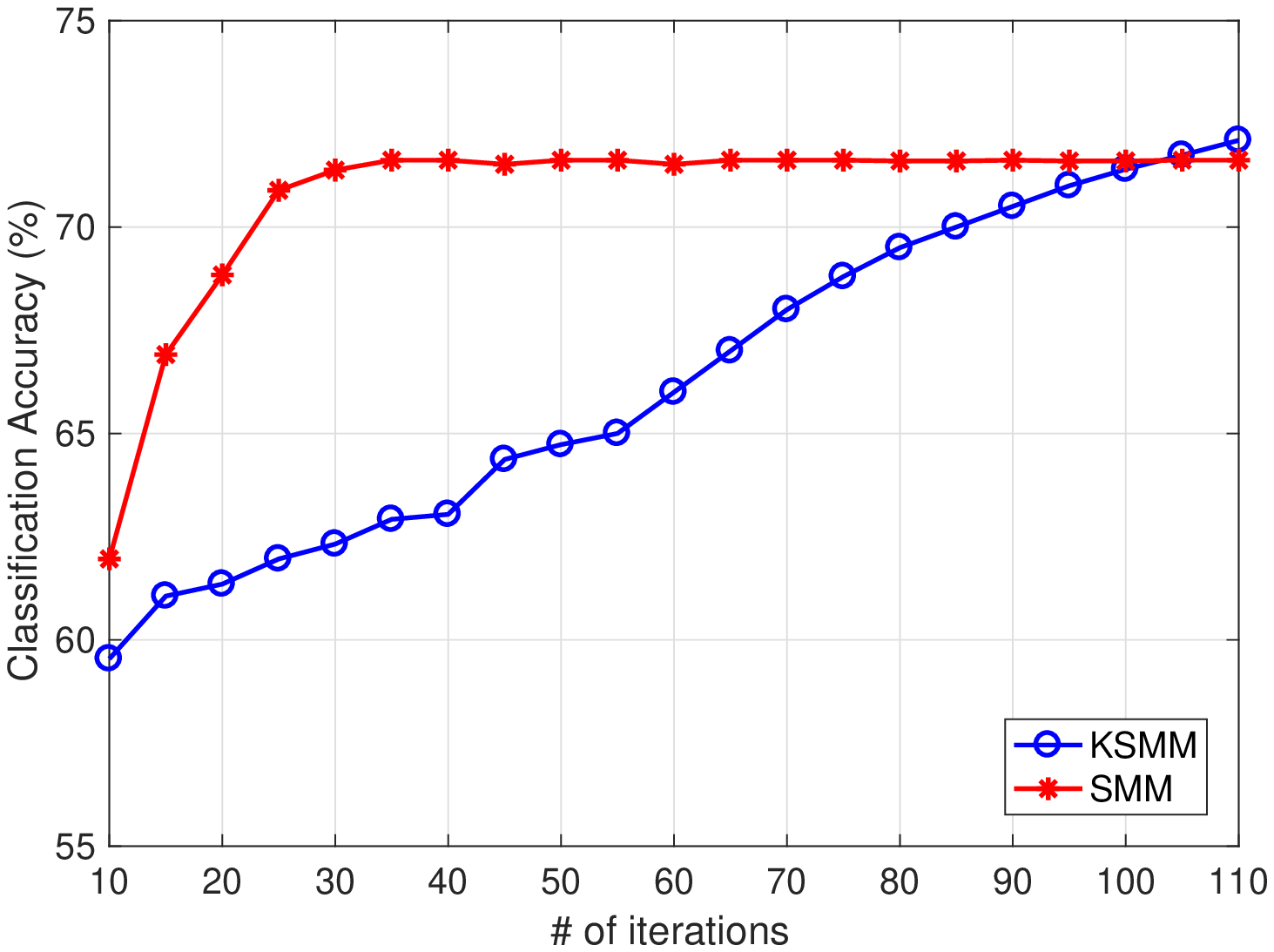}}
\caption{Comparing the accuracy versus the number of iterations of SMM and KSMM for solving different tasks. }
\label{Fig4}
\end{figure}

Overall, the results indicate that KSMM is a significantly effective and competitive alternative for both binary and multiple classification. Note that any reasonable matrix kernel function can be applied in this study.

\section{Concluding Remarks}\label{cr}
Kernel support matrix machine provides a principled way of separating different classes via their projections in a Reproducing Kernel Matrix Hilbert Space. In this paper, we have showed how to use matrix kernel functions to discover the structural similarities within classes for the construction of proposed hyperplane. The theoretical analysis of its generalization bounds highlights the reliability and robustness of KSMM in practice. Intuitively, the optimization problem arising in KSMM only needs to be solved once while other tensor-based classifiers, such as  STM, MRMLKSVM need to be solved iteratively.

As our experimental results demonstrate, KSMM is competitive in terms of accuracy with state-of-the-art classifiers on several classification benchmark datasets. As previous work focuses on decomposing original data as sum of low rank factors, this paper provides a new insight into exploiting the structural information of matrix data.

In future work, we will seek technical solutions of (\ref{dp}) to improve efficiency or figure out other approach to the use of matrix Hilbert space since the problem we analyze here is non-convex. We could only obtain a local optimal solution other than a global one which might deteriorate the performance of KSMM in experiments. Another interesting topic would be to design specialized method to learn the matrix kernel and address parameters. Figuring out that matrix kernel functions and supervised tensor learning are closely related, hence, a natural extension to this work is the derivation of a unifying matrix kernel-based framework for regression, clustering, among other tasks.

\begin{acknowledgements}
The work is supported by National Natural Science Foundations of China under Grant 11531001 and National Program on Key Basic Research Project under Grant 2015CB856004. We are grateful to Dong Han for our discussions.
\end{acknowledgements}

\section*{Appendix}
\begin{appendix}

\section{Proof of Theorem \ref{ksmmb}
}\label{Aa}
First, we recall some basic notations that are useful to our analysis.

The Rademacher complexity of $\mathcal{F}$ with respect to $\mathcal{S}$ is defined as follows:
\begin{equation*}
R(\mathcal{F} \circ \mathcal{S}) = \frac{1}{N} \underset{\bm{\sigma} \sim \{\pm1\}^N}{\mathbb{E}}\bigg[\sup\limits_{f \in \mathcal{F}} \sum_{i=1}^N \sigma_i f(z_i)\bigg].
\end{equation*}
More generally, given a set of vectors, $\mathcal{A} \subset \mathbb{R}^N$, we define
\begin{equation*}
R(\mathcal{A}) = \frac{1}{N} \underset{\bm{\sigma}}{\mathbb{E}}\bigg[\sup\limits_{\textbf{a} \in \mathcal{A}} \sum_{i=1}^N \sigma_i a_i\bigg].
\end{equation*}

In order to prove the theorem we rely on the generalization bounds for KSMM, we show the following lemmas to support our conclusion.

\begin{lemma}\label{l1}
Assume that for all z and $h \in \mathcal{H}_p$ we have that $|l(h,z)| \leq c$, then with probability at least $1-\delta$, for all $h \in \mathcal{H}_p$,
\begin{equation}
L_D(h)-L_{\mathcal{S}}(h) \leq 2 \underset{\mathcal{S}' \sim D^N}{\mathbb{E}} R(\ell \circ \mathcal{H}_p \circ \mathcal{S}')+c\sqrt\frac{2\ln(2/\delta)}{N}.
\end{equation}
\end{lemma}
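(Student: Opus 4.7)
The plan is to apply the standard two-step Rademacher-complexity argument: a concentration inequality reduces the uniform deviation to its expectation, and then a symmetrization step rewrites that expectation in terms of $R(\ell \circ \mathcal{H}_p \circ \mathcal{S}')$. Define
$$\phi(\mathcal{S}) := \sup_{h \in \mathcal{H}_p}\bigl(L_{\mathcal{D}}(h) - L_{\mathcal{S}}(h)\bigr),$$
viewed as a function of the i.i.d. draw $\mathcal{S} = (z_1,\dots,z_N)$. The lemma will follow by bounding $\phi(\mathcal{S})$ with high probability and separately bounding $\mathbb{E}_{\mathcal{S}}\phi(\mathcal{S})$.

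For the concentration step, I would verify that $\phi$ has bounded differences: replacing any single $z_i$ with some $z_i'$ perturbs $L_{\mathcal{S}}(h)$ by at most $2c/N$ uniformly in $h$ (since $|\ell(h,\cdot)|\le c$), hence perturbs $\phi$ by at most $2c/N$. McDiarmid's inequality then gives, for any $t>0$, $\Pr[\phi(\mathcal{S}) - \mathbb{E}\phi(\mathcal{S}) \ge t] \le \exp(-Nt^2/(2c^2))$. Inverting this tail (and keeping the factor of $2$ inside the log to accommodate the symmetrization pairing in the next step) yields with probability at least $1-\delta$
$$\phi(\mathcal{S}) \le \mathbb{E}_{\mathcal{S}}\phi(\mathcal{S}) + c\sqrt{\frac{2\ln(2/\delta)}{N}}.$$

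For the symmetrization step, I would introduce a ghost sample $\mathcal{S}' = (z_1',\dots,z_N') \sim \mathcal{D}^N$ independent of $\mathcal{S}$, write $L_{\mathcal{D}}(h) = \mathbb{E}_{\mathcal{S}'} L_{\mathcal{S}'}(h)$, pull the supremum through the expectation via Jensen, and obtain
$$\mathbb{E}_{\mathcal{S}}\phi(\mathcal{S}) \le \mathbb{E}_{\mathcal{S},\mathcal{S}'} \sup_{h \in \mathcal{H}_p} \frac{1}{N}\sum_{i=1}^N \bigl(\ell(h,z_i') - \ell(h,z_i)\bigr).$$
Exchangeability of each pair $(z_i,z_i')$ lets me insert Rademacher signs $\sigma_i \in \{\pm 1\}$ without changing the joint distribution, and splitting via the triangle inequality produces
$$\mathbb{E}_{\mathcal{S}}\phi(\mathcal{S}) \le 2\,\mathbb{E}_{\mathcal{S}' \sim \mathcal{D}^N} R(\ell \circ \mathcal{H}_p \circ \mathcal{S}').$$
Chaining this with the concentration bound from the previous paragraph delivers the claimed inequality.

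Both ingredients are textbook, so I expect no genuine obstacle; the only item requiring attention is constant bookkeeping, specifically making sure the factor of $2$ inside $\ln(2/\delta)$ is consistent with how the one-sided McDiarmid tail is combined with the symmetrization step (and with any implicit union bound if one also wants to control the reverse direction). Everything else reduces to direct manipulation using $|\ell| \le c$ and the definition of $R(\cdot)$.
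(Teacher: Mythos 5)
Your proposal is correct, and it is essentially the argument the paper relies on: the paper does not prove Lemma~\ref{l1} itself but defers to \cite{Shalev2014Understanding}, whose proof is exactly this two-step scheme (McDiarmid's bounded-differences inequality applied to $\sup_{h}(L_{\mathcal{D}}(h)-L_{\mathcal{S}}(h))$, followed by ghost-sample symmetrization with Rademacher signs). Your constant bookkeeping, including the $2c/N$ bounded-difference constant and the $\ln(2/\delta)$ term, matches the cited result, so there is nothing to add.
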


\begin{lemma}\label{l2}
For each $i=1,\cdots,N$, let $\Phi_i : \mathbb{R} \rightarrow \mathbb{R}$ be a $\rho$-Lipschitz function, namely for all $\alpha,\beta \in \mathbb{R}$ we have $|\Phi_i(\alpha)-\Phi_i(\beta)| \leq \rho |\alpha-\beta|$. For $\textbf{a} \in \mathbb{R}^N$, let $\Phi(\textbf{a})$ denote the vector $(\Phi_1(a_1), \cdots, \Phi_N(a_N))$ and $\Phi \circ \mathcal{A} = \{\Phi(\textbf{a}): \textbf{a} \in \mathcal{A}\}$. Then,
\begin{equation}
R(\Phi \circ \mathcal{A}) \leq \rho R(\mathcal{A}).
\end{equation}
\end{lemma}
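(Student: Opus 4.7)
The plan is to prove the contraction inequality one coordinate at a time. Define an interpolating family of maps: for $k = 0, 1, \ldots, N$, let $\Psi^{(k)}: \mathbb{R}^N \to \mathbb{R}^N$ send $\textbf{a}$ to $(\Phi_1(a_1), \ldots, \Phi_k(a_k), \rho a_{k+1}, \ldots, \rho a_N)$. Then $R(\Psi^{(0)} \circ \mathcal{A}) = \rho R(\mathcal{A})$ by pulling the scalar $\rho$ outside the supremum, and $\Psi^{(N)} \circ \mathcal{A} = \Phi \circ \mathcal{A}$. It therefore suffices to establish the single-coordinate monotonicity $R(\Psi^{(k)} \circ \mathcal{A}) \leq R(\Psi^{(k-1)} \circ \mathcal{A})$ for each $k \in \{1, \ldots, N\}$, and then iterate.

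For fixed $k$, I would condition on all Rademacher variables other than $\sigma_k$, so that the inner expression takes the form $\mathbb{E}_{\sigma_k}[\sup_{\textbf{a} \in \mathcal{A}}(g(\textbf{a}) + \sigma_k \Phi_k(a_k))]$, where $g$ absorbs all the already-handled coordinates. Averaging over $\sigma_k \in \{\pm 1\}$ yields
\begin{equation*}
\tfrac{1}{2}\sup_{\textbf{a} \in \mathcal{A}}[g(\textbf{a}) + \Phi_k(a_k)] + \tfrac{1}{2}\sup_{\textbf{a}' \in \mathcal{A}}[g(\textbf{a}') - \Phi_k(a'_k)] = \tfrac{1}{2}\sup_{\textbf{a}, \textbf{a}' \in \mathcal{A}}\bigl[g(\textbf{a}) + g(\textbf{a}') + \Phi_k(a_k) - \Phi_k(a'_k)\bigr].
\end{equation*}
The crux is to replace $\Phi_k(a_k) - \Phi_k(a'_k)$ by $\rho(a_k - a'_k)$ inside this joint supremum. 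By the Lipschitz hypothesis, $\Phi_k(a_k) - \Phi_k(a'_k) \leq \rho|a_k - a'_k| = \max\bigl(\rho(a_k - a'_k),\,\rho(a'_k - a_k)\bigr)$, so the joint supremum is bounded above by the maximum of two suprema, one with $\rho(a_k - a'_k)$ and one with its sign-flip. Since $g(\textbf{a}) + g(\textbf{a}')$ is invariant under the swap $\textbf{a} \leftrightarrow \textbf{a}'$ while the sign-flipped increment matches the original after relabelling the dummy variables, the two suprema coincide and the bound collapses to $\sup_{\textbf{a}, \textbf{a}'}[g(\textbf{a}) + g(\textbf{a}') + \rho(a_k - a'_k)]$. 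Reading this backwards through the same Rademacher identity gives $\mathbb{E}_{\sigma_k}[\sup_{\textbf{a}}(g(\textbf{a}) + \sigma_k \rho a_k)]$, which is exactly the quantity obtained by replacing $\Phi_k$ with the linear map $a \mapsto \rho a$ in coordinate $k$.

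Taking the outer expectation over the remaining $\sigma_j$'s yields the single-step monotonicity $R(\Psi^{(k)} \circ \mathcal{A}) \leq R(\Psi^{(k-1)} \circ \mathcal{A})$; chaining this through $k = 1, 2, \ldots, N$ delivers $R(\Phi \circ \mathcal{A}) \leq \rho R(\mathcal{A})$. The main obstacle is the Lipschitz-plus-symmetry replacement: one has to absorb the sign ambiguity in $|a_k - a'_k|$ correctly, and this works only because $g(\textbf{a}) + g(\textbf{a}')$ is symmetric under the dummy-variable swap (both $\textbf{a}$ and $\textbf{a}'$ range independently over the same set $\mathcal{A}$) and because $g$ does not depend on the $k$-th coordinate after conditioning. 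Once this symmetry is invoked cleanly, the remaining manipulations are routine applications of the definition of $R(\cdot)$.
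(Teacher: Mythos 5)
Your proof is correct: it is the standard coordinate-by-coordinate contraction argument (condition on all Rademacher variables but one, symmetrize the resulting joint supremum, and use the Lipschitz bound together with the $\textbf{a}\leftrightarrow\textbf{a}'$ swap symmetry to replace $\Phi_k$ by the linear map $a\mapsto\rho a$). The paper does not prove this lemma itself but cites \citet{Shalev2014Understanding}, and your argument is essentially the proof given there (Lemma~26.9), so there is nothing further to compare.
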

The proof of Lemma \ref{l1} and \ref{l2} can be discovered in \citep{Shalev2014Understanding}. Additionally, we present the next lemma.

\begin{lemma}\label{l3}
Let $\mathcal{S}=(\emph{\textbf{X}}_1,\cdots,\emph{\textbf{X}}_N)$ be a finite set of matrices in a matrix Hilbert space $\mathcal{H}$. Define $\mathcal{H} \circ \mathcal{S} = \{(\langle \langle \emph{\textbf{W}},\emph{\textbf{X}}_1 \rangle_{\mathcal{H}},\frac{\emph{\textbf{V}}}{\|\emph{\textbf{V}}\|}\rangle,\cdots,\langle \langle \emph{\textbf{W}},\emph{\textbf{X}}_N \rangle_{\mathcal{H}},\frac{\emph{\textbf{V}}}{\|\emph{\textbf{V}}\|}\rangle):\|\emph{\textbf{W}}\|_{\mathcal{H}}(\emph{\textbf{V}}) \leq 1\}$. Then,
\begin{equation}
R(\mathcal{H} \circ \mathcal{S}) \leq \frac{\max_i \|\emph{\textbf{X}}_i\|_{\mathcal{H}}(\emph{\textbf{V}})}{\sqrt{N}}.
\end{equation}
\end{lemma}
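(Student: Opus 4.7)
The plan is to imitate the classical Rademacher-complexity argument for a ball in a Hilbert space, with the twist that the ``inner product'' here is the bilinear form $\langle \textbf{X},\textbf{Y}\rangle_{(\textbf{V})} := \langle \langle \textbf{X},\textbf{Y}\rangle_{\mathcal{H}}, \tfrac{\textbf{V}}{\|\textbf{V}\|}\rangle$ that defines the norm $\|\cdot\|_{\mathcal{H}}(\textbf{V})$. The first step is to check that this form is a symmetric, positive-semidefinite, bilinear form on $\mathcal{H}$: bilinearity follows from property (2) of Definition \ref{ip} in the first slot and, combined with property (1) and the fact that $\textbf{V}$ is symmetric, gives symmetry $\langle \textbf{X},\textbf{Y}\rangle_{(\textbf{V})}=\langle \textbf{Y},\textbf{X}\rangle_{(\textbf{V})}$; the positive-semidefiniteness is exactly assumption (\ref{ass1}). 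This is what licenses a Cauchy--Schwarz inequality of the form $\langle \textbf{W},\textbf{X}\rangle_{(\textbf{V})} \le \|\textbf{W}\|_{\mathcal{H}}(\textbf{V})\,\|\textbf{X}\|_{\mathcal{H}}(\textbf{V})$.

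Given this, I would unfold the definition of $R(\mathcal{H}\circ\mathcal{S})$, pull the Rademacher sum $\sum_i \sigma_i \textbf{X}_i$ inside the inner product using linearity in the second slot of $\langle\cdot,\cdot\rangle_{\mathcal{H}}$ (property (1)+(2) of Definition \ref{ip}), and then apply Cauchy--Schwarz to obtain
\begin{equation*}
R(\mathcal{H}\circ\mathcal{S}) \;\le\; \frac{1}{N}\,\mathbb{E}_{\bm{\sigma}} \sup_{\|\textbf{W}\|_{\mathcal{H}}(\textbf{V})\le 1} \|\textbf{W}\|_{\mathcal{H}}(\textbf{V}) \cdot \bigl\|{\textstyle\sum_{i=1}^N \sigma_i \textbf{X}_i}\bigr\|_{\mathcal{H}}(\textbf{V}) \;\le\; \frac{1}{N}\,\mathbb{E}_{\bm{\sigma}} \bigl\|{\textstyle\sum_{i=1}^N \sigma_i \textbf{X}_i}\bigr\|_{\mathcal{H}}(\textbf{V}).
\end{equation*}

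Next I would apply Jensen's inequality to move the expectation inside the square root, then expand the squared norm:
\begin{equation*}
\mathbb{E}_{\bm{\sigma}}\bigl\|{\textstyle\sum_{i} \sigma_i \textbf{X}_i}\bigr\|_{\mathcal{H}}(\textbf{V}) \;\le\; \sqrt{\mathbb{E}_{\bm{\sigma}} \sum_{i,j} \sigma_i \sigma_j \langle \textbf{X}_i,\textbf{X}_j\rangle_{(\textbf{V})}}.
\end{equation*}
Independence and $\mathbb{E}[\sigma_i\sigma_j]=\delta_{ij}$ kills the cross terms, leaving $\sum_i \|\textbf{X}_i\|_{\mathcal{H}}^2(\textbf{V}) \le N \max_i \|\textbf{X}_i\|_{\mathcal{H}}^2(\textbf{V})$. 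Dividing by $N$ produces exactly the stated bound $\max_i \|\textbf{X}_i\|_{\mathcal{H}}(\textbf{V})/\sqrt{N}$.

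The one step I expect to need care is the Cauchy--Schwarz inequality for $\langle\cdot,\cdot\rangle_{(\textbf{V})}$, since this bilinear form is only guaranteed to be positive \emph{semi}definite on $\mathcal{H}$ rather than positive definite: the usual proof (consider $\|\textbf{W}-t\textbf{X}\|_{\mathcal{H}}^2(\textbf{V})\ge 0$ as a quadratic in $t$ and read off its discriminant) still works in the semidefinite case, so the rest of the argument goes through cleanly. Everything else (linearity manipulation, Jensen, decoupling of Rademacher squares) is routine.
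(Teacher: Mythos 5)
Your proposal is correct and follows essentially the same route as the paper's proof: unfold the Rademacher complexity, use linearity to pull $\sum_i\sigma_i\textbf{X}_i$ into the second slot, apply Cauchy--Schwarz against the unit ball, then Jensen's inequality and independence of the $\sigma_i$ to reduce to $\sum_i\|\textbf{X}_i\|_{\mathcal{H}}^2(\textbf{V})$. Your added care in justifying Cauchy--Schwarz for the merely positive-semidefinite form $\langle\cdot,\cdot\rangle_{(\textbf{V})}$ is a point the paper glosses over, but it does not change the argument.
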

\begin{proof}
Using Cauchy-Schwartz inequality, we derive the following inequality
\begin{equation}
\begin{split}
N R(\mathcal{H} \circ \mathcal{S}) &= \underset{\bm{\sigma}}{\mathbb{E}} \bigg[\sup\limits_{\textbf{a} \in \mathcal{H} \circ \mathcal{S}} \sum_{i=1}^N \sigma_i a_i\bigg] \\
&= \underset{\bm{\sigma}}{\mathbb{E}} \bigg[\sup\limits_{\textbf{W}:\|\textbf{W}\|_{\mathcal{H}}(\textbf{V}) \leq 1} \sum_{i=1}^N \sigma_i \langle \langle \textbf{W},\textbf{X}_i \rangle_{\mathcal{H}},\frac{\textbf{V}}{\|\textbf{V}\|}\rangle\bigg] \\
&= \underset{\bm{\sigma}}{\mathbb{E}} \bigg[\sup\limits_{\textbf{W}:\|\textbf{W}\|_{\mathcal{H}}(\textbf{V}) \leq 1} \langle \langle \textbf{W}, \sum_{i=1}^N \sigma_i \textbf{X}_i \rangle_{\mathcal{H}},\frac{\textbf{V}}{\|\textbf{V}\|}\rangle\bigg] \\
& \leq \underset{\bm{\sigma}}{\mathbb{E}} \bigg[ \| \sum_{i=1}^N\sigma_i \textbf{X}_i \|_{\mathcal{H}}(\textbf{V}) \bigg].
\end{split}
\end{equation}

Next, using Jensen's inequality we have that
\begin{equation}
\underset{\bm{\sigma}}{\mathbb{E}} \bigg[ \| \sum_{i=1}^N\sigma_i \textbf{X}_i \|_{\mathcal{H}}(\textbf{V}) \bigg]=\underset{\bm{\sigma}}{\mathbb{E}} \bigg[ \Big(\| \sum_{i=1}^N\sigma_i \textbf{X}_i \|_{\mathcal{H}}^2(\textbf{V}) \Big)^{1/2} \bigg] \leq \Big(\underset{\bm{\sigma}}{\mathbb{E}} \bigg[ \| \sum_{i=1}^N\sigma_i \textbf{X}_i \|_{\mathcal{H}}^2(\textbf{V}) \bigg]\Big)^{1/2}.
\end{equation}

Since the variables $\sigma_1,\cdots,\sigma_N$ are independent we have
\begin{equation*}
\begin{split}
\underset{\bm{\sigma}}{\mathbb{E}} \bigg[ \| \sum_{i=1}^N\sigma_i \textbf{X}_i \|_{\mathcal{H}}^2(\textbf{V}) \bigg] &= \underset{\bm{\sigma}}{\mathbb{E}} \bigg[  \sum_{i,j=1}^N \sigma_i \sigma_j \langle \langle \textbf{X}_i,\textbf{X}_j \rangle_{\mathcal{H}},\frac{\textbf{V}}{\|\textbf{V}\|}\rangle \bigg] \\
&=\sum_{i \neq j} \langle \langle \textbf{X}_i,\textbf{X}_j \rangle_{\mathcal{H}},\frac{\textbf{V}}{\|\textbf{V}\|}\rangle \underset{\bm{\sigma}}{\mathbb{E}} [\sigma_i \sigma_j] + \sum_{i=1}^N \langle \langle \textbf{X}_i,\textbf{X}_i \rangle_{\mathcal{H}},\frac{\textbf{V}}{\|\textbf{V}\|}\rangle \underset{\bm{\sigma}}{\mathbb{E}} [\sigma_i^2] \\
&= \sum_{i=1}^N \|\textbf{X}_i\|_{\mathcal{H}}^2(\textbf{V}) \leq N \max_i \|\textbf{X}_i\|_{\mathcal{H}}^2(\textbf{V}).
\end{split}
\end{equation*}

Combining these inequalities we conclude our proof.
\end{proof}
\qed

Finally, we complete our proof as follows. Let $\mathcal{F} = \{ (\textbf{X},y) \mapsto \Phi(\langle \langle \textbf{W}',\textbf{X} \rangle_{\mathcal{H}},\frac{\textbf{V}}{\|\textbf{V}\|}\rangle,y) : \textbf{W}' \in \mathcal{H}'_p\}$. Indeed, the set $\mathcal{F} \circ \mathcal{S}$ can be written as
\begin{equation*}
\mathcal{F} \circ \mathcal{S}=\{(\Phi(\langle \langle \textbf{W}',\textbf{X}_1 \rangle_{\mathcal{H}},\frac{\textbf{V}}{\|\textbf{V}\|}\rangle,y_1),\cdots,\Phi(\langle \langle \textbf{W}',\textbf{X}_N \rangle_{\mathcal{H}},\frac{\textbf{V}}{\|\textbf{V}\|}\rangle,y_N)) : \textbf{W}' \in \mathcal{H}'_p\},
\end{equation*}
and $R(\mathcal{F} \circ \mathcal{S}) \leq \frac{\rho B' R'}{\sqrt{N}}$ with probability 1 follows directly by combining Lemma \ref{l2} and \ref{l3}. Then the claim of Theorem \ref{ksmmb} follows from Lemma \ref{l1}.

\section{Proof of Theorem \ref{ksmmb2}
}\label{Ab}
First, we summarize the following lemma \citep{Shalev2014Understanding}, due to Massart, which states that the Rademacher complexity of a finite set grows logarithmically with the size of the set.
\begin{lemma}[Massart lemma]\label{l4}
Let $\mathcal{A}=\{\textbf{a}_1,\cdots,\textbf{a}_N\}$ be a finite set of vectors in $\mathbb{R}^m$. Define $\bar{\textbf{a}}=\frac{1}{N} \sum_{i=1}^N \textbf{a}_i$. Then,
\begin{equation}
R(\mathcal{A}) \leq \max_{\textbf{a} \in \mathcal{A}} \|\textbf{a}-\bar{\textbf{a}}\|_2 \frac{\sqrt{2\ln{N}}}{m}.
\end{equation}
\end{lemma}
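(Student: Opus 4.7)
The plan is to run the classical Laplace transform (Chernoff) argument behind Massart's lemma, prefaced by a one-step centering reduction so that the bound involves $\|\textbf{a}-\bar{\textbf{a}}\|_2$ rather than the looser $\|\textbf{a}\|_2$.

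First I would exploit translation invariance of the Rademacher supremum. For each fixed $\bm{\sigma}\in\{\pm1\}^m$, writing $a_i = (a_i - \bar{a}_i) + \bar{a}_i$ and noting that $\sum_{i=1}^m \sigma_i \bar{a}_i$ is constant in the maximizer $\textbf{a}$, one pulls this term out of the supremum. Taking expectation over $\bm{\sigma}$, the pulled-out term contributes $\mathbb{E}_{\bm{\sigma}}[\sum_i \sigma_i \bar{a}_i] = 0$. Hence, writing $\widetilde{\mathcal{A}} = \{\textbf{a}-\bar{\textbf{a}} : \textbf{a}\in\mathcal{A}\}$ and $B := \max_{\textbf{a}\in\mathcal{A}} \|\textbf{a}-\bar{\textbf{a}}\|_2$, the identity $m\cdot R(\mathcal{A}) = \mathbb{E}_{\bm{\sigma}}[\sup_{\textbf{b}\in\widetilde{\mathcal{A}}} \langle \bm{\sigma},\textbf{b}\rangle]$ holds, and all remaining work is to upper bound this last quantity by $B\sqrt{2\ln N}$.

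Next I would exponentiate and apply a union-style bound inside the expectation. Introducing a free parameter $\lambda > 0$ and using Jensen's inequality followed by $\exp(\sup)=\sup(\exp) \leq \sum \exp$ gives $\exp\bigl(\lambda\, \mathbb{E}_{\bm{\sigma}}[\sup_{\textbf{b}} \langle \bm{\sigma},\textbf{b}\rangle]\bigr) \leq \sum_{\textbf{b}\in\widetilde{\mathcal{A}}} \mathbb{E}_{\bm{\sigma}}[\exp(\lambda \langle \bm{\sigma},\textbf{b}\rangle)]$. Independence of the Rademacher coordinates factorizes the inner expectation, and Hoeffding's MGF estimate $\mathbb{E}[\exp(\lambda \sigma_i b_i)] = \cosh(\lambda b_i) \leq \exp(\lambda^2 b_i^2 / 2)$ collapses each factor; multiplying across $i$ yields at most $\exp(\lambda^2 \|\textbf{b}\|_2^2 / 2) \leq \exp(\lambda^2 B^2 / 2)$. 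Summing over the $N$ elements of $\widetilde{\mathcal{A}}$ and taking logarithms then produces the scalar inequality $\lambda \, \mathbb{E}_{\bm{\sigma}}[\sup_{\textbf{b}} \langle \bm{\sigma},\textbf{b}\rangle] \leq \ln N + \lambda^2 B^2 / 2$, valid for every $\lambda > 0$.

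Finally I would optimize in $\lambda$. The minimizer $\lambda^\star = \sqrt{2\ln N}/B$ yields $\mathbb{E}_{\bm{\sigma}}[\sup_{\textbf{b}} \langle \bm{\sigma},\textbf{b}\rangle] \leq B\sqrt{2\ln N}$, and dividing by $m$ recovers the stated bound. I do not expect any genuine obstacle, since every step is standard once the centering reduction is in place; the only place mild care is needed is keeping the two uses of $N$ straight, as the $N$ inside $\ln N$ is the cardinality of $\mathcal{A}$ whereas $m$ is the ambient dimension used to define $R(\mathcal{A})$ by averaging over $\bm{\sigma}\in\{\pm 1\}^m$.
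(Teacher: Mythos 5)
Your proof is correct: the centering step, the Jensen/Chernoff exponentiation with the union bound over the $N$ elements, Hoeffding's bound $\cosh(\lambda b_i)\le \exp(\lambda^2 b_i^2/2)$, and the optimization $\lambda^\star=\sqrt{2\ln N}/B$ all go through, and you correctly keep the roles of $N$ (cardinality) and $m$ (dimension, hence the divisor in $R(\mathcal{A})$) separate. Note, however, that the paper does not prove this lemma at all --- it states it as a known result and cites Shalev-Shwartz and Ben-David (2014); your argument is precisely the standard proof given in that reference, so there is nothing to contrast beyond observing that you have supplied the omitted details. The only cosmetic caveat is the degenerate case $B=0$ (e.g.\ $N=1$), where $\lambda^\star$ is undefined but the claimed bound holds trivially.
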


Define $\mathcal{H}_{\max} \circ \mathcal{S} = \{(\langle \langle \textbf{W},\textbf{X}_1 \rangle_{\mathcal{H}},\frac{\textbf{V}}{\|\textbf{V}\|}\rangle,\cdots,\langle \langle \textbf{W},\textbf{X}_N \rangle_{\mathcal{H}},\frac{\textbf{V}}{\|\textbf{V}\|}\rangle):\|\textbf{W}\|_{\max} \leq 1\}$. Next we bound the Rademacher complexity of $\mathcal{H}_{\max} \circ \mathcal{S}$.

\begin{lemma}\label{l5}
Let $\mathcal{S}=\{\emph{\textbf{X}}_1,\cdots,\emph{\textbf{X}}_N\}$ be a finite set of matrices in $\mathbb{R}^{m \times n}$. Then,
\begin{equation}
R(\mathcal{H}_{\max} \circ \mathcal{S}) \leq n \max_{1 \leq i \leq N} \| \emph{\textbf{X}}_i\|_1 \sqrt{\frac{2(m \ln 2+\ln n)}{N}}.
\end{equation}
\end{lemma}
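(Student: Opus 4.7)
The plan is to combine a column-wise decomposition of the supremum, which separates the constraint $\|\textbf{W}\|_{\max}\le 1$ into $n$ independent $\ell_\infty$-balls, with Massart's lemma (Lemma~\ref{l4}) applied to a finite family of cardinality $n\cdot 2^m$. Since $\ln(n\cdot 2^m)=m\ln 2+\ln n$, this combination produces exactly the exponent appearing in the claimed bound.

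First I would expand the inner product column by column. Writing $\textbf{V}':=\textbf{V}/\|\textbf{V}\|$, and letting $\textbf{w}_p$, $\textbf{x}_{iq}$ denote the $p$-th column of $\textbf{W}$ and the $q$-th column of $\textbf{X}_i$, an elementary rearrangement using $\langle\textbf{A},\textbf{B}\rangle_{\mathcal{H}}=\textbf{A}^\intercal\textbf{B}$ yields
\begin{equation*}
\sum_{i=1}^N \sigma_i\,\langle\langle\textbf{W},\textbf{X}_i\rangle_{\mathcal{H}},\textbf{V}'\rangle=\sum_{p=1}^n \textbf{w}_p^\intercal \tilde{\textbf{y}}_p,\qquad \tilde{\textbf{y}}_p:=\sum_{i=1}^N\sum_{q=1}^n \sigma_i\,V'_{pq}\,\textbf{x}_{iq}\in\mathbb{R}^m.
\end{equation*}
Since $\|\textbf{W}\|_{\max}\le 1$ is equivalent to $\|\textbf{w}_p\|_\infty\le 1$ for every $p$, the supremum separates across columns and by $\ell_\infty/\ell_1$ duality is equal to $\sum_{p=1}^n\|\tilde{\textbf{y}}_p\|_1$.

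Next I would use the crude bound $\sum_p\le n\max_p$ to extract the leading factor $n$, and represent each $\ell_1$ norm as a maximum over sign vectors $\textbf{s}\in\{\pm 1\}^m$, converting the quantity into a maximum of $N$-dimensional Rademacher sums over a set of size $n\cdot 2^m$:
\begin{equation*}
\sup_{\|\textbf{W}\|_{\max}\le 1}\sum_i \sigma_i\,\langle\langle\textbf{W},\textbf{X}_i\rangle_{\mathcal{H}},\textbf{V}'\rangle \le n\max_{(p,\textbf{s})}\sum_{i=1}^N \sigma_i\,a_i^{(p,\textbf{s})},\qquad a_i^{(p,\textbf{s})}:=\sum_{q=1}^n V'_{pq}\,\textbf{s}^\intercal\textbf{x}_{iq}.
\end{equation*}
Massart's lemma applied to this finite family then gives $\mathbb{E}_{\bm{\sigma}}\bigl[\max_{(p,\textbf{s})}\sum_i \sigma_i a_i^{(p,\textbf{s})}\bigr]\le \max_{(p,\textbf{s})}\sqrt{\sum_i(a_i^{(p,\textbf{s})})^2}\,\sqrt{2(m\ln 2+\ln n)}$, which supplies the right exponent factor.

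The final step is the radius estimate: H\"older's inequality together with $\|\textbf{s}\|_\infty=1$ yields $|\textbf{s}^\intercal\textbf{x}_{iq}|\le\|\textbf{x}_{iq}\|_1\le\|\textbf{X}_i\|_1$, and absorbing the $V'_{pq}$ weights in the same manner, using the normalization of $\textbf{V}'$, gives $|a_i^{(p,\textbf{s})}|\le\|\textbf{X}_i\|_1$; hence $\sqrt{\sum_i(a_i^{(p,\textbf{s})})^2}\le\sqrt{N}\max_i\|\textbf{X}_i\|_1$. Dividing by $N$ and collecting the pieces produces precisely the stated inequality. The main technical obstacle is this radius estimate: handling the weighted combination $\sum_q V'_{pq}\textbf{x}_{iq}$ and extracting $\|\textbf{X}_i\|_1$ uniformly over $(p,\textbf{s})$ requires a careful H\"older-type argument that relies on the normalization $\|\textbf{V}'\|=1$ together with the definition of $\|\cdot\|_1$ as the maximum column sum.
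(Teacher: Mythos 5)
Your decomposition up through the application of Massart's lemma is sound and runs parallel to the paper's argument (the paper also reduces to a Massart bound over the $n\cdot 2^m$ pairs of a column index and a sign pattern, which is where the exponent $m\ln 2+\ln n$ comes from). The gap is exactly at the step you flagged as the main obstacle: the radius estimate $|a_i^{(p,\textbf{s})}|\le\|\textbf{X}_i\|_1$ is false in general. From $a_i^{(p,\textbf{s})}=\sum_q V'_{pq}\,\textbf{s}^\intercal\textbf{x}_{iq}$, H\"older gives $|a_i^{(p,\textbf{s})}|\le\bigl(\sum_q|V'_{pq}|\bigr)\max_q|\textbf{s}^\intercal\textbf{x}_{iq}|\le\|V'_{p,:}\|_1\,\|\textbf{X}_i\|_1$, but the normalization $\|\textbf{V}'\|=1$ is in the Frobenius norm, which only gives $\|V'_{p,:}\|_2\le 1$ per row and hence $\|V'_{p,:}\|_1\le\sqrt n$; this $\sqrt n$ is attained, e.g.\ by $\textbf{V}'$ with one row equal to $(1/\sqrt n,\dots,1/\sqrt n)$ and the rest zero. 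Concretely, with that $\textbf{V}'$, with $\textbf{X}_i$ having all columns equal to a vector $x$ satisfying $\|x\|_1=1$, and with $\textbf{s}=\sgn(x)$ entrywise, one gets $a_i^{(1,\textbf{s})}=\sqrt n>1=\|\textbf{X}_i\|_1$. As written, your argument therefore only yields the claimed bound with an extra factor of $\sqrt{n}$, i.e.\ $n^{3/2}$ in place of $n$.

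The fix is to spend the factor $n$ more carefully rather than via the crude bound $\sum_p\le n\max_p$ followed by a per-$(p,\textbf{s})$ radius. Writing $\tilde{\textbf{y}}_p=\textbf{A}\,(V'_{p,:})^\intercal$ with $\textbf{A}=\sum_i\sigma_i\textbf{X}_i$, the induced-norm inequality $\|\textbf{A}v\|_1\le\|\textbf{A}\|_1\|v\|_1\le\sqrt n\,\|\textbf{A}\|_1\|v\|_2$ together with Cauchy--Schwarz over $p$ gives $\sum_p\|\tilde{\textbf{y}}_p\|_1\le\sqrt n\,\|\textbf{A}\|_1\sum_p\|V'_{p,:}\|_2\le n\,\|\textbf{A}\|_1$; then apply Massart to $\mathbb{E}_{\bm{\sigma}}\|\sum_i\sigma_i\textbf{X}_i\|_1$ over the $n\cdot2^m$ vectors with entries $u^{(j,\bm{\gamma})}_i=\sum_{i'}\gamma_{i'}[\textbf{X}_i]_{i'j}$, whose $\ell_2$ radius is $\sqrt N\max_i\|\textbf{X}_i\|_1$. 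This is essentially what the paper does, except that it eliminates $\textbf{V}'$ at the very first step by Cauchy--Schwarz in the Frobenius inner product, $\langle\langle\textbf{W},\textbf{A}\rangle_{\mathcal H},\textbf{V}'\rangle\le\|\textbf{W}^\intercal\textbf{A}\|$ since $\|\textbf{V}'\|=1$, and then uses the matrix inequality $\|\textbf{W}^\intercal\textbf{A}\|\le n\|\textbf{W}\|_{\max}\|\textbf{A}\|_1$ before invoking Massart.
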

\begin{proof}
Using inequality (\ref{HleF}), we have
\begin{equation}\label{max1}
\begin{split}
N R(\mathcal{H}_{\max} \circ \mathcal{S}) &= \underset{\bm{\sigma}}{\mathbb{E}} \bigg[\sup\limits_{\textbf{a} \in \mathcal{H}_{\max} \circ \mathcal{S}} \sum_{i=1}^N \sigma_i a_i\bigg] \\
&= \underset{\bm{\sigma}}{\mathbb{E}} \bigg[\sup\limits_{\textbf{W}:\|\textbf{W}\|_{\max} \leq 1} \sum_{i=1}^N \sigma_i \langle \langle \textbf{W},\textbf{X}_i \rangle_{\mathcal{H}},\frac{\textbf{V}}{\|\textbf{V}\|}\rangle\bigg] \\
&= \underset{\bm{\sigma}}{\mathbb{E}} \bigg[\sup\limits_{\textbf{W}:\|\textbf{W}\|_{\max} \leq 1} \langle \langle \textbf{W}, \sum_{i=1}^N \sigma_i \textbf{X}_i \rangle_{\mathcal{H}},\frac{\textbf{V}}{\|\textbf{V}\|}\rangle\bigg] \\
& \leq \underset{\bm{\sigma}}{\mathbb{E}} \bigg[\sup\limits_{\textbf{W}:\|\textbf{W}\|_{\max} \leq 1} \| \langle \textbf{W}, \sum_{i=1}^N \sigma_i \textbf{X}_i \rangle_{\mathcal{H}} \| \bigg] \\
& = \underset{\bm{\sigma}}{\mathbb{E}} \bigg[\sup\limits_{\textbf{W}:\|\textbf{W}\|_{\max} \leq 1} \| \textbf{W}^{\intercal} \sum_{i=1}^N \sigma_i \textbf{X}_i \| \bigg] \\
& \leq  \underset{\bm{\sigma}}{\mathbb{E}} \bigg[\sup\limits_{\textbf{W}:\|\textbf{W}\|_{\max} \leq 1} n \| \textbf{W}\|_{\max} \| \sum_{i=1}^N \sigma_i \textbf{X}_i \|_1 \bigg] \\
& = n  \underset{\bm{\sigma}}{\mathbb{E}} \bigg[\| \sum_{i=1}^N \sigma_i  \textbf{X}_i \|_1 \bigg].
\end{split}
\end{equation}
For each $j=1,\cdots,n$, we define $\textbf{u}_j^{\bm{\gamma} \in \{\pm 1\}^m}=(\sum\limits_{i=1}^m \gamma_i [\textbf{X}_1]_{i j},\cdots, \sum\limits_{i=1}^m \gamma_i [\textbf{X}_N]_{i j}) \in \mathbb{R}^N$. Note that $\|\textbf{u}_j^{\bm{\gamma}} \|_2 \leq \sqrt{N} \max_{1 \leq i \leq N} \|\textbf{X}_i\|_1$. Let $\mathcal{U}=\{\textbf{u}_j^{\bm{\gamma}} : j=1,\cdots,n, \bm{\gamma} \in \{\pm 1\}^m\}$. The right-hand side of Equation (\ref{max1}) is $NnR(\mathcal{U})$. Using Massart lemma (Lemma \ref{l4}) we have that
\begin{equation*}
R(\mathcal{U})  \leq \max_{1 \leq i \leq N} \|\textbf{X}_i\|_1 \sqrt{\frac{2(m \ln 2+\ln n)}{N}},
\end{equation*}
which concludes our proof.
\end{proof}
\qed

The rest of the proof is identical to the proof of Theorem \ref{ksmmb}, while relying on Lemma \ref{l5} instead of relying on Lemma \ref{l3}.
\end{appendix}

% BibTeX users please use one of
\bibliographystyle{spbasic}      % basic style, author-year citations
\bibliography{Reference}   % name your BibTeX data base

% Non-BibTeX users please use
%\begin{thebibliography}{}
%
% and use \bibitem to create references. Consult the Instructions
% for authors for reference list style.
%
%\bibitem{RefJ}
% Format for Journal Reference
%Author, Article title, Journal, Volume, page numbers (year)
% Format for books
%\bibitem{RefB}
%Author, Book title, page numbers. Publisher, place (year)
% etc
%\end{thebibliography}

\end{document}